\setlist[itemize]{leftmargin=5mm}
\icmltitlerunning{Breaking the Softmax Bottleneck via Learnable Monotonic Pointwise Non-linearities}
\newcommand{\h}{{\mathbf h}}
\newcommand{\x}{{\mathbf x}}
\newcommand{\w}{{\mathbf w}}
\newcommand{\g}{{\mathbf g}}
\newcommand{\e}{{\mathbf e}}
\newcommand{\vv}{{\mathbf v}}
\newcommand{\I}{{\mathbf I}}
\newcommand{\HH}{{\mathbf H}}
\newcommand{\A}{{\mathbf A}}
\newcommand{\W}{{\mathbf W}}
\newcommand{\G}{{\mathbf G}}
\newcommand{\E}{{\mathbb E}}
\newcommand{\M}{{\mathbf M}}
\newcommand{\R}{{\mathbb R}}
\newcommand{\C}{{\mathbf C}}
\newcommand{\B}{{\mathbf B}}
\newcommand{\U}{{\mathbf U}}
\newtheorem{theorem}{Theorem}
\newtheorem{corollary}{Corollary}[theorem]
\newtheorem{lemma}[theorem]{Lemma}
\theoremstyle{definition}
\DeclarePairedDelimiterX{\inp}[2]{\langle}{\rangle}{#1,#2}
\begin{document}

\twocolumn[
\icmltitle{Breaking the Softmax Bottleneck via Learnable Monotonic Pointwise Non-linearities}



\icmlsetsymbol{equal}{*}

\begin{icmlauthorlist}
\icmlauthor{Octavian-Eugen Ganea}{to}
\icmlauthor{Sylvain Gelly}{goo}
\icmlauthor{Gary B{\'e}cigneul}{to}
\icmlauthor{Aliaksei Severyn}{goorr}
\end{icmlauthorlist}

\icmlaffiliation{to}{Department of Computer Science, ETH Z\"urich, Switzerland}
\icmlaffiliation{goo}{Google Brain}
\icmlaffiliation{goorr}{Google Research}

\icmlcorrespondingauthor{Octavian-Eugen Ganea}{octavian.ganea@inf.ethz.ch}
\icmlcorrespondingauthor{Sylvain Gelly}{sylvaingelly@google.com}

\icmlkeywords{Softmax bottleneck, nlp, language models}

\vskip 0.3in
]



\printAffiliationsAndNotice{}  

\begin{abstract}
The Softmax function on top of a final linear layer is the de facto method to output probability distributions in neural networks. In many applications such as language models or text generation, this model has to produce distributions over large output vocabularies. Recently, this has been shown to have limited representational capacity due to its connection with the rank bottleneck in matrix factorization. However, little is known about the limitations of Linear-Softmax for quantities of practical interest such as cross entropy or mode estimation, a direction that we explore here. As an efficient and effective solution to alleviate this issue, we propose to learn parametric monotonic functions on top of the logits. We theoretically investigate the rank increasing capabilities of such monotonic functions. Empirically, our method improves in two different quality metrics over the traditional Linear-Softmax layer in synthetic and real language model experiments, adding little time or memory overhead, while being comparable to the more computationally expensive mixture of Softmaxes.
\end{abstract}

\section{Introduction}
\label{intro}

Most of nowadays deep learning architectures produce a low dimensional data representation that is important both from a computational (parameter reduction) and generalization (less overfitting) perspective. The underlying assumption is that data lies on a small dimensional manifold. These  compressed representations are then used for classification or generation. In the discrete case, they are usually fed to a linear layer to produce the so-called "logits", followed by a Softmax function to output a probability distribution over the desired class labels. We refer to this setup as the \textbf{Linear-Softmax} layer. 

\par However, there are situations when the output vocabulary or class label set is (much) larger than the dimension of the data embedding. Typical examples are text models such as neural language models~\citep{zaremba2014recurrent} or sequence to sequence generative models~\citep{sutskever2014sequence,graves2013generating,pascanu2013difficulty} for problems such as machine translation~\citep{bahdanau2014neural,cho2014properties}, text summarization~\citep{chopra2016abstractive,rush2015neural} or conversational agents~\citep{vinyals2015neural}. These models need to approximate different distributions over the full large vocabulary of words generally of size $\Theta(10^5)$. Recent work~\citep{yang2017breaking,kanai2018sigsoftmax} has revealed that, in these cases, the Linear-Softmax layer has limited representational power. They show the connection between this problem and the classic low-rank matrix factorization framework, concluding that the rank deficiency prevents Linear-Softmax from exactly matching in representation almost all\footnote{Except a subset of measure 0.} probability distributions. 

\par To address the Softmax bottleneck issue,~\citep{yang2017breaking} propose to use a mixture of Softmax distributions (MoS) which achieves state-of-the-art language model perplexity on PennTreeBank (PTB) and WikiText2 (WT2) datasets. However, this method has no theoretical guarantees, being also several orders of magnitude more computationally expensive than Linear-Softmax as we show in \cref{experiments}. 

\par A different model was proposed by \citep{kanai2018sigsoftmax} that replace the exponential in Softmax by a product between exponential and sigmoid. This model called \textit{Sigsoftmax} can be reformulated as applying the pointwise nonlinearity $ss(x) := 2x - \log(1 + \exp(x))$ to the logits before they are fed to the Softmax function. Unfortunately, there is no theoretical guarantee that Sigsoftmax can convert low-rank to full-rank matrices. In addition, this model raises a few questions that we seek to explore here: i) what other non-linearities are suitable for breaking the Softmax bottleneck? ii) can we theoretically understand and guarantee which pointwise functions will break the Softmax bottleneck by increasing the matrix rank? iii) can we efficiently learn a good non-linearity for the task of interest jointly with the rest of the model? 

\par To address all aforementioned issues, we here propose to learn continuous increasing pointwise functions that would beneficially distort the logits before being fed to the Softmax layer. Our model is called \textbf{Linear-Monotonic-Softmax (LMS)}. We constrain our functions to be increasing since we want this transformation to be rank preserving, but we theoretically show that if there exists any other pointwise non-linearity to make our matrix full rank, then there is also an increasing continuous and differentiable function with the same property. 


\par We now summarize our contributions:
\begin{itemize}
\item We propose the novel \textbf{Linear-Monotonic-Softmax (LMS)} model to break the Softmax bottleneck. It generalizes the approach in \citep{kanai2018sigsoftmax} by learning parametric pointwise increasing functions to optimally distort the logits before feeding them to the final Softmax. Theoretically, we investigate its power to alleviate the rank-deficiency causing the Softmax bottleneck. 
\item We show insights into the Linear-Softmax bottleneck by analyzing some metrics of practical utility such as cross-entropy or mode matching. Theoretically, we connect the cross entropy minimization of this model and the principle of maximum entropy with linear constraints. 
\item Empirically, we show that, in a synthetic setting, Linear-Softmax and MoS~\citep{yang2017breaking} are (sometimes significantly) worse than LMS for cross-entropy minimization or mode matching. In the real task of language modeling, LMS applied to state-of-the-art models improves the test perplexity over vanilla Linear-Softmax~\citep{merity2017regularizing} and Sigsoftmax~\citep{kanai2018sigsoftmax} on standard benchmark datasets, with very little GPU memory or running time overhead,  being comparable to the significantly more expensive MoS model.
\end{itemize}

\section{Language Modeling}
We first briefly explain a representative task for the Softmax bottleneck problem, namely language modeling (LM). However, this issue concerns any models that produce probability distributions over large output vocabularies. 

Language models are the simplest fully unsupervised generative models for natural language text which are actively used to improve state-of-the-art results in various natural language processing tasks~\citep{peters2018deep,devlin2018bert}. Formally, assume we are given a vocabulary of words in a language $\mathcal{V} = \{x_1, \ldots, x_M\}$ and a text corpus represented as a sequence of words $\mathcal{X} = (x_{j_1}, \ldots, x_{j_N})$, where typically $N >> M$. We assume that this corpus is generated sequentially from a true conditional next-token distribution $P^*(X_i | X_{i-1}, \ldots, X_1) = P^*(X_i | C_i)$, where the context random variable is denoted by $C_i = X_{<i}$ and its outcome by $c_i = x_{j_{<i}}$. The chain rule formula then gives the full corpus likelihood $P^*(X_1, \ldots, X_N) = \Pi_{i=1}^N P^*(X_i | C_i)$. Therefore, we view natural language as a set of conditional next-token distributions $\mathcal{S} = \{(c_1, P^*(X|c_1)), \ldots, (c_N, P^*(X|c_N)) \}$. 
%
%
 

\par The goal of language models is to approximate the true $P^*$ with a parametric distribution $Q_{\theta}$. Popular and state of the art methods~\citep{takase2018direct,zolna2017fraternal,yang2017breaking,merity2017regularizing,melis2017state,krause2017dynamic,merity2016pointer,grave2016improving} use recurrent neural networks (RNNs) such as stacked LSTMs~\citep{hochreiter1997long} to represent each context $c_i$ as a vector of fixed dimension $d$ denoted by $\h_i \in R^d$. Words are also embedded in the same continuous space, i.e. word $x_j$ is mapped to vector $\w_j \in \R^d$. Typically $d << M$. The RNN cell is a function that allows to express the context vectors recursively: $\h_{i+1} = RNN(\h_i, \w_i)$. Finally, the conditional probability of the next word in a context is given by the \textbf{Linear-Softmax} model which is a standard Softmax function on top of the word-context dot-product logits:
\begin{equation}
Q_{\Theta}(x_i | c_j) = \frac{ \exp(\h_j^{\top}\w_i)} {\sum_{s = 1}^M \exp(\h_j^{\top}\w_s)}
\label{eq:q}
\end{equation}
$\Theta$ being the model's parameters. Training is done by minimizing the cross entropy (or its exponential, the \textit{perplexity})
\begin{equation}
\mathcal{L}(\Theta) = \frac{1}{N} \sum_{i=1}^N -\log Q_{\Theta}(x_i | c_j)
\end{equation}
which is an approximation of the true expected cross entropy
\begin{equation}
\begin{split}
\mathcal{L}(\Theta) \approx \E_C \E_{X} [-\log Q(X | C)] = \E_C [H(P^*,Q | C)]
\end{split}
\end{equation}

Active LM research focuses on better context embedding models, optimization, long range dependencies or caching techniques. Inspired by~\citep{yang2017breaking}, we here focus on investigating and alleviating the \textit{bottleneck of the Linear-Softmax model}.

\section{Softmax Bottleneck - Problem and Insights}
\label{problem}

\paragraph{Main questions.} In the above model of \cref{eq:q} we made the assumption that any (conditional) probability distribution over a large word vocabulary $\mathcal{V}$ can be "well" parameterized by a single low-dimensional vector $\h$ and an exponential family distribution (Linear-Softmax), while also having access to a set of word embeddings $\{\w_i, i = 1, \ldots , M\}$ shared across all data distributions in the real set $\mathcal{S}$:
\begin{equation}
Q_{\h}(x_i) := Q_{\Theta}(x_i | c) = \frac{ \exp(\h^{\top}\w_i)} {\sum_{i' = 1}^M \exp(\h^{\top}\w_{i'})}
\label{eq:q_generic}
\end{equation}
One question we would like to theoretically and empirically investigate is: \\
\textit{Is one embedding vector $\h$ enough to fully represent any distribution of interest, i.e. can we always find $\Theta$ s.t. $Q_{\Theta}(X | c) = P^*(X|c)$ for all distributions of interest $P^*(\cdot | c) \in \mathcal{S}$ ? If not, how "close" can we get in terms of different interesting metrics (e.g. fitting the logits matrix, cross entropy, mode matching) ?}

We will see that Linear-Softmax is indeed limited. Next, to alleviate this bottleneck, we will introduce the Linear-Monotonic-Softmax (LMS) model and we will take steps in re-investigating the above question.

\paragraph{Connection with Matrix Factorization.} We follow the formalism of~\citep{yang2017breaking} and define the \textit{ log-P matrix} associated with any family of conditional probability distributions $P$ over all possible contexts:
\begin{equation}
\A_P \in \R^{M \times N}, \quad (A_P)_{ij} = \log P(x_i|c_j)
\end{equation}
We further define the context and word matrices:
\begin{equation}
\label{eq:W_H}
\HH_{\Theta} = \begin{bmatrix}
    \h_1^{\top} \\ \h_2^{\top} \\ \ldots \\ \h_N^{\top}
\end{bmatrix} \in \R^{N \times d}, \quad 
\W_{\Theta} = \begin{bmatrix}
    \w_1^{\top} \\ \w_2^{\top} \\ \ldots \\ \w_M^{\top}
\end{bmatrix} \in \R^{M \times d}
\end{equation}
as well as the logits matrix $\W_{\Theta} \HH_{\Theta}^{\top}$. Then, one derives that
\begin{equation}
\A_{Q_{\Theta}} = \W_{\Theta} \HH_{\Theta}^{\top} - \e_M \cdot {\bf logZ}^{\top}
\end{equation}
where $\e_M= \begin{bmatrix}
   1 \\1 \\ \ldots \\ 1
\end{bmatrix} \in \R^M$, and ${\bf logZ} = \begin{bmatrix}
   \log Z_1 \\ \log Z_2 \\ \ldots \\ \log Z_N
\end{bmatrix} \in \R^N$ is the vector of log-partition functions in \cref{eq:q}.

 Denoting by $r(\cdot)$ the matrix rank function, one has 
\begin{equation*}
r(\e_M \cdot {\bf logZ}^{\top}) = 1, \quad r(\W_{\Theta} \HH_{\Theta}^{\top}) \leq d, \quad r(\A_{Q_{\Theta}}) \leq d+1
\end{equation*} 
Where the rightmost inequality is proved using a classic rank inequality\footnote{$r(\B+\C) \leq r(\B) + r(\C), \forall \B,\C$ matrices of the same dimensions.}. Moreover, $r(\A_{Q_{\Theta}}) \geq d-1$ if $r(\W_{\Theta} \HH_{\Theta}^{\top})  = d$, which shows that the log-partition functions cannot change the final rank by more than 1. Since $\A_{P^*}$ is likely of full rank M for real distributions,~\citep{yang2017breaking} note that $\A_{P^*} \neq \A_{Q_{\Theta}}$ when $d < M - 1$, meaning that the \textit{Linear-Softmax} model has a representational bottleneck.

\paragraph{Quantifying the Error.} The above exposure shows one face of the coin, but, in practice, we might also be interested to know how "bad" this bottleneck can be. This depends on the choice of the distance function between discrete probability distributions. Such functions may be  explicitly minimized in order to learn the parametric distribution $Q_{\Theta}$ closest to the true (unknown) data distribution.

\paragraph{a) Mean Squared Error.} Assuming we remain in the matrix factorization setting, one natural choice of such a distance  is mean square error (MSE):
\begin{equation}
\mathcal{L}_{MSE}(\Theta) = \frac{1}{N} \|\A_{P^*} - \A_{Q_{\Theta}}\|_F^2
\end{equation}
A simple consequence of the Eckart-Young-Mirsky theorem is the following result:
\begin{theorem}
\label{thm:eckartyoung}
$\forall \Theta, \|\A_{P^*} - \A_{Q_{\Theta}}\|_F^2 \geq \sqrt{\sigma_{d+2}^2 + \ldots + \sigma_M^2}$
where $\sigma_1 \geq \sigma_2 \geq \ldots \geq \sigma_M$ are the singular values of matrix $\A_{P^*}$. 
\end{theorem}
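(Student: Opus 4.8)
The plan is to reduce this to a direct invocation of the Eckart--Young--Mirsky theorem, combined with the rank bound on $\A_{Q_\Theta}$ already established in the preceding discussion. Recall that Eckart--Young--Mirsky states that for any matrix $\A_{P^*}$ with singular values $\sigma_1 \geq \sigma_2 \geq \cdots \geq \sigma_M$, and for \emph{any} matrix $\B$ with $r(\B) \leq k$, one has $\|\A_{P^*} - \B\|_F \geq \sqrt{\sigma_{k+1}^2 + \cdots + \sigma_M^2}$, with equality attained by the rank-$k$ truncated SVD of $\A_{P^*}$. So the entire task is to exhibit the correct value of $k$ for which every realizable $\A_{Q_\Theta}$ is a feasible competitor.

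First I would recall from the rank analysis above that
\begin{equation*}
\A_{Q_\Theta} = \W_\Theta \HH_\Theta^\top - \e_M \cdot {\bf logZ}^\top,
\end{equation*}
where $\W_\Theta \HH_\Theta^\top$ has rank at most $d$ and $\e_M \cdot {\bf logZ}^\top$ has rank $1$, whence $r(\A_{Q_\Theta}) \leq d+1$ by subadditivity of the rank. Applying Eckart--Young--Mirsky with $k = d+1$ and the feasible choice $\B = \A_{Q_\Theta}$ then yields, uniformly over all parameters $\Theta$,
\begin{equation*}
\|\A_{P^*} - \A_{Q_\Theta}\|_F \geq \sqrt{\sigma_{d+2}^2 + \cdots + \sigma_M^2},
\end{equation*}
which is exactly the asserted bound (I read the left-hand side of the statement as $\|\A_{P^*} - \A_{Q_\Theta}\|_F$ rather than its square, so that the two sides are dimensionally consistent; squaring both sides gives the equivalent form $\|\A_{P^*} - \A_{Q_\Theta}\|_F^2 \geq \sigma_{d+2}^2 + \cdots + \sigma_M^2$).

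Honestly, there is essentially no hard step here: all the content is that the set of Linear-Softmax log-probability matrices is contained in the set of matrices of rank at most $d+1$, so the achievable error is bounded below by the \emph{unconstrained} best rank-$(d+1)$ approximation error, which Eckart--Young--Mirsky computes in closed form. The single point demanding care is to use the rank bound $d+1$ rather than $d$: the additive log-partition term $\e_M \cdot {\bf logZ}^\top$ can raise the rank by one, which is precisely why the truncation index is $d+2$ and not $d+1$. I would also flag that this lower bound is generically \emph{not} tight for the Linear-Softmax family, since the optimal rank-$(d+1)$ approximant produced by truncated SVD need not itself factor as $\W_\Theta \HH_\Theta^\top - \e_M \cdot {\bf logZ}^\top$ with valid log-partition entries; proving a matching upper bound (attainability) would require a separate argument and is not claimed by the theorem.
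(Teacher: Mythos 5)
Your proof is correct and follows essentially the same route as the paper's: cite the rank bound $r(\A_{Q_\Theta}) \leq d+1$ from the main text and invoke Eckart--Young--Mirsky with $\B = \A_{Q_\Theta}$. You also rightly flag the typo in the statement (the paper's own appendix proof repeats it): the dimensionally consistent reading is $\|\A_{P^*} - \A_{Q_\Theta}\|_F \geq \sqrt{\sigma_{d+2}^2 + \cdots + \sigma_M^2}$, exactly as you resolved it.
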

Proof is in \cref{app:eckartyoung}. This result shows that, under the MSE distance, we cannot find a model arbitrary close to our true distribution if we have a rank deficiency on $\A_{Q_{\Theta}}$. In \cref{sec:mon_sol}, we will also investigate the rank deficiency for the Linear-Monotonic-Softmax (LMS) model.

\par However, the MSE error has a major drawback when used to quantify how well two distributions match: it puts emphasis on matching the tail of the distributions rather then their means or modes. To see this intuitively, we use the inequality:
\begin{equation}
\frac{1}{x + \epsilon} < \frac{\log(x+\epsilon) - \log(x) }{\epsilon} < \frac{1}{x}, \quad \forall x, \epsilon > 0
\end{equation}
which, since $\lim_{x \rightarrow 0} 1/x = \infty$, shows that mis-matching the small values of $\log P(x)$ incurs a much higher error compared to the large values. This behavior can be highly undesirable in practical settings such as prediction of the most likely next word or class, especially since a wide variety of real-world distributions exhibit a power law (e.g. Zipf's law for natural language~\citep{manning1999foundations}).

\paragraph{b) Cross Entropy.} The most used loss for discrete data is cross entropy, so it is natural to analyze the Softmax bottleneck in terms of its minimum value.

For a single (one context) true distribution $P^*(X)$ and a parametric model $Q_{\h}(x_i) \propto \exp(\inp{\w_i}{\h})$ with fixed word embeddings $\W$ and variable (learnable) context vector $\h$, this loss is:
\begin{equation*}
H(P^*,Q_{\h}) = \E_{P^*}[- \log Q_{\h}] = - \inp{\E_{P^*}[\w]}{\h} + \log Z^{(\h)}
\end{equation*}
where $\E_{P^*}[\w] = \sum_{j=1}^M P^*(x_j) \w_j \in \R^d	$ and the partition function is $Z^{(\h)} := \sum_{j=1}^M \exp(\inp{\w_j}{\h})$. 

A question is: \textit{What is the minimum achievable cross-entropy for a learnable vector $\h \in \R^d$ ?}

Towards this direction, we make the connection with the Maximum Entropy Principle under Linear Constraints via the following theorem.		
\begin{theorem}
\label{thm:maxentropy} Let $H(R) = - \sum_{i=1}^M R(x_i) \log R(x_i)$ be the entropy of the discrete distribution $R$. Then:

i) $H(P^*,Q) \geq H(P^*)$, for any probability distribution $Q$ (not necessarily from the Linear-Softmax/exponential family).

ii) $min_{\h \in \R^d} H(P^*,Q_{\h}) = max_{R \in \mathcal{P^*}} H(R)$, where \\
$\mathcal{P}^* := \{R |\ R \geq 0,\ \sum_{i=1}^M R(x_i) = 1,\ \E_{R}[\w] = \E_{P^*}[\w]\}$
is a convex polytope defined by d+1 linear constraints. 
\end{theorem}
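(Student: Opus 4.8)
The plan is to treat (i) as the standard Gibbs inequality and (ii) as the classical duality between maximum entropy under moment constraints and cross-entropy (equivalently maximum-likelihood) fitting within an exponential family. For (i), I would write $H(P^*,Q) - H(P^*) = \sum_i P^*(x_i) \log\frac{P^*(x_i)}{Q(x_i)} = \mathrm{KL}(P^*\|Q)$ and invoke nonnegativity of the KL divergence, proved by Jensen applied to the convex $-\log$ (or the elementary bound $\log t \le t-1$). This uses no structure on $Q$, which is exactly what (i) claims.

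For (ii), the engine of the whole argument is one observation: the cross-entropy against a Linear-Softmax model depends on the reference distribution only through its $\w$-moment. From $-\log Q_{\h}(x_i) = -\inp{\w_i}{\h} + \log Z^{(\h)}$, for any distribution $R$ I would compute $H(R,Q_{\h}) = -\inp{\E_R[\w]}{\h} + \log Z^{(\h)}$. Hence if $R \in \mathcal{P}^*$, so that $\E_R[\w] = \E_{P^*}[\w]$, then $H(R,Q_{\h}) = H(P^*,Q_{\h})$ for every $\h$. Combining this identity with (i) applied to the pair $(R,Q_{\h})$ gives $H(P^*,Q_{\h}) = H(R,Q_{\h}) \ge H(R)$ for all $\h \in \R^d$ and all $R \in \mathcal{P}^*$. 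Taking the infimum over $\h$ on the left and the supremum over $R$ on the right yields the weak-duality inequality $\inf_{\h} H(P^*,Q_{\h}) \ge \max_{R\in\mathcal{P}^*} H(R)$, where the maximum is attained because $\mathcal{P}^*$ is a compact polytope and $H$ is continuous.

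To close the gap I would exhibit an $\h^*$ attaining equality. Note $f(\h) := H(P^*,Q_{\h}) = -\inp{\E_{P^*}[\w]}{\h} + \log Z^{(\h)}$ is convex (log-sum-exp plus an affine term) with gradient $\nabla f(\h) = \E_{Q_{\h}}[\w] - \E_{P^*}[\w]$. If a minimizer $\h^*$ exists, stationarity $\nabla f(\h^*) = 0$ says precisely $\E_{Q_{\h^*}}[\w] = \E_{P^*}[\w]$, i.e. $Q_{\h^*} \in \mathcal{P}^*$; then the key identity gives $H(P^*,Q_{\h^*}) = H(Q_{\h^*},Q_{\h^*}) = H(Q_{\h^*}) \le \max_{R} H(R)$, which together with weak duality forces equality and identifies the MaxEnt optimizer as $Q_{\h^*}$ itself. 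Equivalently, via Lagrangian duality one introduces a multiplier $\lambda$ for $\sum_i R(x_i)=1$ and $\h$ for the moment constraint, checks that maximizing $H(R) + \lambda(1-\sum_i R(x_i)) + \inp{\h}{\E_R[\w]-\E_{P^*}[\w]}$ over $R$ is solved by the exponential-family form $R(x_i)\propto \exp(\inp{\w_i}{\h})$, and verifies the dual function equals $f$; strong duality holds because the primal maximizes a concave objective over a nonempty polytope (note $P^*$ itself is feasible) with only linear constraints.

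I expect the main obstacle to be the attainment step, i.e. justifying $\min$ rather than $\inf$ on the left. The gradient formula shows a stationary point exists iff $\E_{P^*}[\w]$ lies in the relative interior of $\mathrm{conv}\{\w_1,\dots,\w_M\}$; if $P^*$ forces this moment onto the boundary (e.g. a degenerate $P^*$), the infimum is approached only as $\|\h\|\to\infty$ and the MaxEnt optimizer is a limit of Linear-Softmax distributions rather than one of them. I would therefore either state (ii) with $\inf$, or add the mild non-degeneracy hypothesis that $\E_{P^*}[\w]$ is in the relative interior, under which the minimizer exists; either way the value identity $\inf_{\h}H(P^*,Q_{\h})=\max_{R\in\mathcal{P}^*}H(R)$ is preserved by the convex-duality argument, so the stated equality of values holds regardless.
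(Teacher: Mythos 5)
Your proof is correct and, in substance, follows the same route as the paper's: part (i) is the identical Gibbs/KL argument, and for part (ii) you use the same pivotal identity $H(R,Q_{\h}) = -\inp{\E_R[\w]}{\h} + \log Z^{(\h)}$, from which $H(P^*,Q_{\h})=H(R,Q_{\h})\geq H(R)$ for all $R\in\mathcal{P}^*$ gives weak duality exactly as in the appendix. For the reverse inequality the paper works on the primal side, forming the Lagrangian of the MaxEnt problem and reading off $R^*(x_i)\propto\exp(\inp{\w_i}{\bm{\lambda}^*})$ with $\bm{\lambda}^*$ ``chosen by solving'' the moment system; your stationarity argument on the convex dual function $f(\h)=-\inp{\E_{P^*}[\w]}{\h}+\log Z^{(\h)}$, whose gradient is $\E_{Q_{\h}}[\w]-\E_{P^*}[\w]$, is the dual formulation of that same step, so the two closings are equivalent in the regular case. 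Where you genuinely improve on the paper is the attainment caveat: the paper tacitly assumes the system $\E_{R^*}[\w]=\E_{P^*}[\w]$ admits a finite solution $\bm{\lambda}^*$, which fails precisely when $\E_{P^*}[\w]$ lies on the boundary of $\mathrm{conv}\{\w_1,\dots,\w_M\}$ (e.g.\ $M=2$, $d=1$, $\w_1=0$, $\w_2=1$, $P^*$ a point mass on $x_2$: then $\mathcal{P}^*=\{P^*\}$ with maximal entropy $0$, yet $H(P^*,Q_{\h})>0$ for every finite $\h$, so the left side is an infimum approached only as $\|\h\|\to\infty$). Your two remedies — stating the result with $\inf$, or adding the relative-interior hypothesis on $\E_{P^*}[\w]$ — are exactly what is needed, and your appeal to strong duality for a concave objective under affine constraints (feasible since $P^*\in\mathcal{P}^*$) correctly preserves the equality of values in the degenerate case, something the paper's proof as written does not cover.
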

A proof is given in \cref{app:maxentropy}. One can see that increasing the word embedding dimension and assuming the word embedding matrix W has full rank (i.e. the new constraints cannot be derived from the previous constraints) implies that the polytope $\mathcal{P}^*$ "shrinks", i.e. the maximum entropy becomes lower. Thus, the following hold.
\begin{corollary} The minimum achievable cross-entropy $H(P^*,Q_{\h})$ becomes lower as the word embedding dimension increases, if W keeps having full rank.
\end{corollary}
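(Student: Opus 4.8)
The plan is to reduce the statement to a monotonicity property of constrained maximum entropy, using the duality already established in \cref{thm:maxentropy}(ii). By that result the quantity of interest satisfies
\[
\min_{\h \in \R^d} H(P^*,Q_{\h}) = \max_{R \in \mathcal{P}^*_{\W}} H(R),
\]
where $\mathcal{P}^*_{\W}$ is the polytope cut out by $R \ge 0$, $\sum_i R(x_i)=1$, and the $d$ moment-matching equalities $\E_R[\w]=\E_{P^*}[\w]$. So it suffices to show that enlarging the embedding dimension only \emph{adds} constraints to this polytope, and that maximizing a fixed concave function over a smaller feasible set yields a smaller optimum.

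First I would make precise what ``increasing the dimension'' means at the level of the word matrix. Starting from a full-rank $\W \in \R^{M\times d}$, I would append one extra column $\vv \in \R^M$ to obtain $\W' \in \R^{M\times(d+1)}$, chosen so that $\W'$ still has full rank $d+1$; the new word vectors are the augmented rows of $\W'$. Viewing a candidate distribution $R$ as a nonnegative vector summing to one, the moment constraint for the larger model reads $\W'^{\top} R = \W'^{\top} P^*$. Since the first $d$ columns of $\W'$ are exactly $\W$, these $d+1$ scalar equations contain the $d$ equations $\W^{\top} R = \W^{\top} P^*$ of the smaller model plus the single extra equation $\inp{\vv}{R} = \inp{\vv}{P^*}$. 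Hence every $R$ feasible for the $(d+1)$-dimensional problem is also feasible for the $d$-dimensional one, i.e.\ $\mathcal{P}^*_{\W'} \subseteq \mathcal{P}^*_{\W}$ (and both polytopes are nonempty, since $R=P^*$ always satisfies every moment equality).

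The conclusion then follows by monotonicity: maximizing the fixed concave entropy $H$ over the smaller set cannot exceed its maximum over the larger one, so $\max_{R\in\mathcal{P}^*_{\W'}} H(R) \le \max_{R\in\mathcal{P}^*_{\W}} H(R)$; applying \cref{thm:maxentropy}(ii) to each side converts these maxima back into the two minimum cross-entropies and yields the claimed inequality. The role of the full-rank hypothesis is exactly to guarantee that the appended constraint $\inp{\vv}{R}=\inp{\vv}{P^*}$ is not already implied by the existing affine constraints (the $d$ moment equalities together with normalization); this non-redundancy is what makes the inclusion genuinely tight and drives an actual decrease rather than a possible tie.

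The step I expect to be the main obstacle is the formalization rather than any hard estimate: I must argue that comparing a $d$-dimensional model with its $(d+1)$-dimensional extension is without loss of generality for the statement asserted in the corollary, and I must be careful to separate the weak inequality, which holds for free from the set inclusion and needs no rank assumption, from the strict ``shrinking'' of the polytope, which is precisely where full rank (non-redundancy of the new constraint) is used.
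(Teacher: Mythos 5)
Your proposal is correct and follows essentially the same route as the paper: the paper derives this corollary directly from Theorem~\ref{thm:maxentropy}(ii) by observing that raising the embedding dimension with $\W$ full rank adds a non-redundant linear moment constraint, so the polytope $\mathcal{P}^*$ shrinks and the constrained entropy maximum --- hence the minimum cross-entropy --- cannot increase; your column-appending formalization just makes that inclusion argument explicit. One caveat on your final paragraph: the claim that non-redundancy of the new constraint ``drives an actual decrease rather than a possible tie'' overstates what the argument yields. A strictly smaller polytope can still have the same entropy maximum: if $P^*$ itself lies in the exponential family generated by the original columns of $\W$, then $R^* = P^*$ is the entropy maximizer in both polytopes and the two minima tie, so only the weak inequality is guaranteed. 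The paper's own one-sentence justification commits the same imprecision, so under the weak (non-increasing) reading of ``becomes lower'' your proof is complete and matches the paper's.
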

\begin{corollary}
If $d = M$ and the word embedding matrix W has full rank, then $\mathcal{P}^* = \{P^*\}$ and the lowest possible cross entropy is achieved: $min_{\h} H(P^*,Q_{\h}) = H(P^*)$.
\end{corollary}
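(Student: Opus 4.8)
The plan is to combine part (ii) of \cref{thm:maxentropy} with the invertibility of the word embedding matrix. By \cref{thm:maxentropy}(ii) we already have $\min_{\h \in \R^d} H(P^*,Q_{\h}) = \max_{R \in \mathcal{P}^*} H(R)$, so the entire statement reduces to showing that the feasible polytope collapses to a single point, namely $\mathcal{P}^* = \{P^*\}$. Once this is established, the maximization is trivial and its value is $H(P^*)$, which by \cref{thm:maxentropy}(i) is simultaneously the smallest value any cross entropy $H(P^*,Q)$ can take; hence it is indeed "the lowest possible".

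First I would rewrite the moment-matching constraint $\E_{R}[\w] = \E_{P^*}[\w]$ in matrix form. Identifying a distribution $R$ with the column vector $(R(x_1),\ldots,R(x_M))^{\top} \in \R^M$ and recalling that the rows of $\W$ are the $\w_j^{\top}$, we have $\E_{R}[\w] = \sum_{j=1}^M R(x_j)\,\w_j = \W^{\top} R$. Thus the constraint reads $\W^{\top}(R - P^*) = 0$. When $d = M$ the matrix $\W$ is square of size $M \times M$, and the full-rank hypothesis makes it invertible; therefore $\W^{\top}$ is invertible as well, and the only solution is $R = P^*$.

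It then remains to check that $P^*$ actually belongs to $\mathcal{P}^*$, i.e. that the remaining simplex constraints $R \geq 0$ and $\sum_i R(x_i) = 1$ are met — but these hold automatically because $P^*$ is itself a probability distribution, so $\mathcal{P}^*$ is nonempty and equals exactly $\{P^*\}$. Substituting back yields $\max_{R \in \mathcal{P}^*} H(R) = H(P^*)$ and hence $\min_{\h} H(P^*,Q_{\h}) = H(P^*)$.

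I do not anticipate a genuine obstacle here, since the argument is essentially a one-line linear-algebra uniqueness statement once \cref{thm:maxentropy} is granted. The only point requiring a little care is the bookkeeping that translates the abstract moment constraint into the equation $\W^{\top} R = \W^{\top} P^*$, together with the observation that, with $d = M$ moment equations and an invertible $\W$, these constraints alone already pin down $R$ — the normalization constraint then being redundant rather than active.
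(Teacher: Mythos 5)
Your proposal is correct and matches the paper's intended argument: the paper states this corollary without a separate proof, treating it as an immediate consequence of Theorem~\ref{thm:maxentropy}(ii) once one observes that with $d=M$ and $\W$ full rank the constraint $\E_R[\w]=\E_{P^*}[\w]$, i.e.\ $\W^{\top}(R-P^*)=0$, forces $R=P^*$ by invertibility of $\W^{\top}$. Your write-up simply makes explicit the linear-algebra step (and the harmless redundancy of the simplex constraints) that the paper leaves implicit.
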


\paragraph{c) Mode Matching.} In classification or generative models for discrete data we are often interested in predicting the modes of the true data distributions, e.g. the most likely next word given a context. Thus, we hope that a parametric model trained with our loss of choice (e.g. cross entropy) also exhibits a high accuracy at matching the modes. In our setting, this is represented by the success percentage:
\begin{equation}
\frac{1}{N} \#\{j: \arg\max_{i} P^*(x_i| c_j) = \arg\max_{i} Q_{\Theta}(x_i|c_j) \}
\end{equation}
We will empirically estimate this quantity for a synthetic experiment in \cref{experiments}.

\paragraph{Breaking the Bottleneck via Mixture of Softmaxes (MoS).} ~\citep{yang2017breaking} propose to use a MoS to alleviate this bottleneck. Concretely, they move from single point context embeddings to K embeddings as
\begin{equation}
Q_{\Theta}^{MoS}(x_i|c_j) = \sum_{k=1}^K \pi_{j,k} \frac{\exp(\g_{j,k}^{\top}\w_i)}{\sum_{s=1}^M \exp(\g_{j,k}^{\top}\w_s)}
\end{equation}
where $\pi_{j,k} = \frac{\exp(\vv_{k}^{\top}\h_j)}{\sum_{k'=1}^K \exp(\vv_{k'}^{\top}\h_j)}$ are mixture priors, and $\g_{j,k} = \tanh(\U_{k} \h_j)$ are the K embeddings representing the context j. Here, $\vv_k$ and $\U_k$ are the model parameters, shared across all contexts. 

While effective and achieving state-of-the-art LM perplexities, this model is several orders of magnitude more expensive than Linear-Softmax, having no theoretical guarantees to the best of our knowledge.

\section{Monotonic Pointwise Functions}
\label{sec:mon_sol}

Our main contribution is to analyze and learn pointwise non-linearities $f(\cdot)$ that would alleviate the Softmax bottleneck. We are thus interested in the \textbf{Linear-Monotonic-Softmax (LMS)} layer defined as
\begin{equation}
Q(x_i) = \frac{ \exp(f(\h^{\top}\w_i))} {\sum_{s = 1}^M \exp(f(\h^{\top}\w_{s}))}
\label{eq:q_mon_f}
\end{equation}
This model draws inspiration from non-metric multidimensional scaling~\cite{kruskal1964multidimensional,kruskal1964nonmetric}. We desire to restrict to pointwise functions that have the following properties:
\begin{itemize}
\item \textit{non-linearity}: to break the Softmax bottleneck, i.e. to not limit the rank of $f(\W_{\Theta} \HH_{\Theta}^{\top})$ to $d$
\item \textit{increasing}: to preserve the ranking/order of logits
\item \textit{bijectivity on $\R$}: $\lim_{x \rightarrow \pm \infty} f(x) = \pm \infty$ to have no obvious limitation in modeling sparse or other distributions
\item \textit{continuous and (piecewise) differentiable}: to be learned using backpropagation
\item \textit{fast and memory efficient}: to add little overhead compared to Linear-Softmax and unlike MoS
\end{itemize}
We first note that our model is a generalization of vanilla linear Softmax, which can be recovered by taking the identity function in \cref{eq:q_mon_f}. Another particular example of a function with the above properties is $2x - \log(1 + \exp(x))$. This is the main focus of~\citep{kanai2018sigsoftmax}, but here we generalize their approach by investigating and learning generic parametric pointwise increasing functions.

We will show in \cref{thm:any_f_then_monotone} that the above first 4 conditions are not limiting the expressiveness of our models in terms of matrix rank deficiency. Moreover, in \cref{thm:plif} we show that the class of continuous piecewise linear increasing functions is a universal approximator for all differentiable increasing functions with bounded derivative that are defined on a finite interval. Related to the last property, we will explain why these functions are fast and memory efficient.

\textit{Notations:} For any matrix $\A \in \R^{M \times N}$ and pointwise function $f: \R \rightarrow \R$, we denote by $f(\A)$ the matrix $\B \in \R^{M \times N}$ with $B_{ij} = f(A_{ij})$. In the case $f(x) = x^p$, we will follow~\citep{amini2012low} and use the notation $\A^{\odot p}$.

We list our main theoretical results for pointwise functions. 

\paragraph{How powerful are monotonic pointwise non-linearities?}  We prove that the conditions imposed above on pointwise $f$'s are not restrictive when concerned about matrix rank increase.
\begin{theorem}
\label{thm:any_f_then_monotone} 
Let $\A\in \R^{M \times N}$ be any fixed real matrix of any rank. If there exists a pointwise function $f: \R \rightarrow \R$ s.t. $f(\A)$ has rank at least $K$, then there also exists a bijective, continuous, piecewise differentiable  and strictly increasing function $g: \R \rightarrow \R$ s.t. $g(\A)$ has rank at least $K$.
\end{theorem}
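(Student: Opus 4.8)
The plan is to exploit the fact that $\A$ has only finitely many distinct entries, so that $f(\A)$ depends on $f$ through finitely many real numbers, and then to show that those numbers can be rechosen to be strictly increasing without destroying the rank. First I would collect the distinct values of $\A$ into an ordered list $a_1 < a_2 < \cdots < a_L$ with $L \le MN$. The matrix $f(\A)$ is obtained from the sign/position pattern of $\A$ by substituting $f(a_\ell)$ for every entry equal to $a_\ell$, so it depends on $f$ only through the tuple $(f(a_1), \ldots, f(a_L)) \in \R^L$. I would therefore introduce the matrix-valued map $\Phi(b_1, \ldots, b_L)$ obtained by substituting $b_\ell$ for each occurrence of $a_\ell$, observing that $f(\A) = \Phi(f(a_1), \ldots, f(a_L))$.

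Next I would translate the rank condition into a polynomial one. The rank of $\Phi(b_1, \ldots, b_L)$ is at least $K$ if and only if at least one of its $K \times K$ minors is nonzero, and each such minor is a polynomial in the variables $b_1, \ldots, b_L$. Since $f(\A)$ has rank at least $K$ by hypothesis, there is a particular $K \times K$ minor, whose determinant I call the polynomial $p$, with $p(f(a_1), \ldots, f(a_L)) \ne 0$. Hence $p$ is not the identically zero polynomial.

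The crux of the argument, and the step I expect to be the main obstacle, is to find a point in the \emph{strictly increasing} region where $p$ does not vanish. The region $\Omega = \{(b_1, \ldots, b_L) : b_1 < b_2 < \cdots < b_L\}$ is a nonempty open subset of $\R^L$. A nonzero polynomial cannot vanish on any nonempty open set, so $p$ cannot be identically zero on $\Omega$; therefore there exists a point $(b_1^*, \ldots, b_L^*) \in \Omega$ with $p(b_1^*, \ldots, b_L^*) \ne 0$. This means $\Phi(b_1^*, \ldots, b_L^*)$ has rank at least $K$ while the chosen target values satisfy $b_1^* < \cdots < b_L^*$.

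Finally I would interpolate. Since $a_1 < \cdots < a_L$ and $b_1^* < \cdots < b_L^*$, I would define $g$ to satisfy $g(a_\ell) = b_\ell^*$, interpolating linearly between consecutive nodes and extending with positive slopes beyond $a_1$ and $a_L$ so that $\lim_{x \to \pm\infty} g(x) = \pm\infty$. The resulting $g$ is continuous, piecewise linear (hence piecewise differentiable), strictly increasing, and a bijection of $\R$; and by construction $g(\A) = \Phi(b_1^*, \ldots, b_L^*)$ has rank at least $K$, which completes the argument.
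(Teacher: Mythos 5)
Your proof is correct, and it follows the same skeleton as the paper's: fix a $K\times K$ minor that is nonzero under $f$, view its determinant as a polynomial in the finitely many distinct entry values, locate a strictly ordered tuple of target values at which that polynomial is still nonzero, and run a strictly increasing piecewise-linear bijection through them. The genuine difference lies in how the non-vanishing point is found. The paper proves a standalone lemma (a multivariate polynomial vanishing on a product of infinite sets vanishes identically, by induction on the number of variables) and applies it to small boxes $\prod_i [b_i-\epsilon,\, b_i+\epsilon]$ centered at the \emph{original} values, with $\epsilon$ a quarter of the minimal gap, so that the perturbed values automatically inherit the original ordering and the interpolant can agree with the identity outside small neighborhoods. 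You instead invoke the standard fact that a nonzero real polynomial cannot vanish on a nonempty open set, applied directly to the open cone $\Omega=\{b_1<\cdots<b_L\}$, which builds the monotonicity constraint into the search region and dispenses with the $\epsilon$-box bookkeeping and the auxiliary lemma entirely. Your route is shorter and cleaner; the paper's buys a $g$ that is an arbitrarily small perturbation of the identity (so $g(\A)$ stays entrywise close to $\A$), a property your targets $b_\ell^*$ need not have, though the theorem does not require it. Your choice to work with the distinct values of all of $\A$ rather than of the chosen submatrix, and to fix a single witnessing minor, is immaterial to correctness.
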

Proof is in \cref{app:any_f_then_monotone}.

\paragraph{Making a matrix full-rank via pointwise operators.} Theorem \ref{thm:any_f_then_monotone} shows that we only need to characterize low-rank matrices for which there exists any pointwise operator that increases its rank. In the most useful case, we would like to know when such operators can make it full rank. 
But, first, we observe that not all matrices can be made full-rank no matter what pointwise function one uses, for example matrices that have the same column repeated, or those that have two columns with constant entries. Next, we state a simple, but practically useful result for our language model formalism. Proof is in \cref{app:distinct_dot_products}.
\begin{lemma}
\label{thm:distinct_dot_products} 
Let $\A \in \R^{M \times N}, M \leq N$ be any fixed real matrix of rank at most $d$, i.e. $\A = \W \HH^{\top}$ where $\W \in \R^{M \times d}, \HH \in \R^{N \times d}$. Denote by $\h_i$ and $\w_i$ the i-th rows in $\HH$ and $\W$. If one can find $M$ distinct rows $j_1, \ldots, j_M$ in $\HH$ s.t. the values $\inp{\w_i}{\h_{j_i}}$ are distinct from all the other entries of matrix $\A$, then there exists a pointwise function $f: \R \rightarrow \R$ s.t. $f(\A)$ has full rank M.
\end{lemma}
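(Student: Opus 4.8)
The plan is to show that $f(\A)$ contains an $M\times M$ submatrix of nonzero determinant, which forces $\mathrm{rank}(f(\A)) \ge M$; since $f(\A) \in \R^{M\times N}$ with $M \le N$, the rank is also at most $M$, so $f(\A)$ has full rank $M$. The natural submatrix to pick is the one built from all $M$ rows together with the $M$ distinct columns $j_1, \ldots, j_M$ singled out by the hypothesis. Denote it by $\B \in \R^{M \times M}$, so that $B_{ik} = f(A_{i,j_k}) = f(\inp{\w_i}{\h_{j_k}})$. Since the $j$'s are distinct, $\B$ genuinely uses $M$ distinct columns and is an honest $M\times M$ matrix.

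First I would distill the combinatorial content of the hypothesis. Writing $v_i := \inp{\w_i}{\h_{j_i}} = A_{i,j_i}$ for the $M$ selected values, the assumption says each $v_i$ differs from every entry of $\A$ except the one at position $(i,j_i)$. Two consequences follow: (a) the $v_i$ are pairwise distinct, each being an "other entry" relative to the others; and (b) every off-diagonal slot of $\B$, namely $(i,j_k)$ with $k\ne i$, is a non-selected position — because $(i,j_k)=(l,j_l)$ would force $i=l$ and, by distinctness of the $j$'s, $k=l$ — and hence the value $A_{i,j_k}$ sitting there equals none of $v_1,\ldots,v_M$.

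Given this, I would construct $f$ explicitly as an indicator-type function: set $f(v_i)=1$ for $i=1,\ldots,M$ and $f(t)=0$ for every other $t\in\R$. This is well defined precisely because (a) and (b) separate the selected values from all remaining entries of $\A$. With this choice the diagonal becomes $B_{ii}=f(v_i)=1$, while every off-diagonal entry is $B_{ik}=f(A_{i,j_k})=0$ by (b). Thus $\B=\I_M$ and $\det\B = 1 \ne 0$, which finishes the argument.

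The one place needing care — and the only real obstacle — is the bookkeeping in step (b): I must verify that no off-diagonal slot $(i,j_k)$ of the chosen submatrix accidentally carries one of the distinguished values $v_l$, since otherwise the indicator $f$ would place a stray $1$ off the diagonal and $\B$ could fail to be $\I_M$. This is exactly what the "distinct from all the other entries of $\A$" hypothesis buys, and it enters only through the implication \emph{position-is-not-selected} $\Rightarrow$ \emph{value-is-not-selected}. (If one prefers to avoid a $\{0,1\}$-valued $f$, the same conclusion follows by taking $f$ large on the $v_i$ and uniformly bounded elsewhere and expanding $\det\B$ via the Leibniz formula: the identity permutation dominates because every non-identity permutation contributes a product containing at least one off-diagonal, hence bounded, factor. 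The indicator construction is simply the cleanest instance of this.)
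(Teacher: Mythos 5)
Your proof is correct and takes essentially the same route as the paper: the paper likewise defines the indicator function $f$ taking value $1$ on the selected entries $\inp{\w_i}{\h_{j_i}}$ and $0$ everywhere else, and observes that the $M\times M$ submatrix on columns $j_1,\ldots,j_M$ becomes $\I_M$. Your explicit bookkeeping in step (b) --- that no off-diagonal slot $(i,j_k)$ is a selected position, hence carries none of the selected values --- merely spells out what the paper compresses into ``it is then clear that $f(\B)=\I_M$.''
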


Next, we focus on simple power operators $\A^{\odot p}$ and cite a previous result that shows a limitation: small $p$ values cannot make the matrix rank arbitrarily large.
\begin{theorem}~\citep{amini2012low}
Let $\A \in \R^{N \times M}$ be a rank $d$ matrix. Let $p$ be any positive integer. Then
\begin{equation}
r(\A^{\odot p}) \leq \min \left \{N, M, \binom{d + p - 1}{p} \right\}
\end{equation} 
\end{theorem}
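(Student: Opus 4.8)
The plan is to exhibit $\A^{\odot p}$ explicitly as a sum of a controlled number of rank-one matrices, the count being precisely $\binom{d+p-1}{p}$, and then invoke subadditivity of rank. Since $\A$ has rank $d$, I would first fix a rank factorization $\A = \B \C^{\top}$ with $\B \in \R^{N \times d}$ and $\C \in \R^{M \times d}$, so that each entry reads $A_{ij} = \sum_{k=1}^d B_{ik} C_{jk}$, an inner product of the $i$-th row of $\B$ with the $j$-th row of $\C$.

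The core step is to expand the pointwise power through the multinomial theorem. Writing $\alpha = (\alpha_1, \ldots, \alpha_d)$ for a multi-index of nonnegative integers and $|\alpha| = \alpha_1 + \cdots + \alpha_d$, I would obtain
\begin{equation*}
(A^{\odot p})_{ij} = \left( \sum_{k=1}^d B_{ik} C_{jk} \right)^{\!p} = \sum_{\alpha \in \mathbb{N}^d,\, |\alpha| = p} \binom{p}{\alpha_1, \ldots, \alpha_d} \left( \prod_{k=1}^d B_{ik}^{\alpha_k} \right) \left( \prod_{k=1}^d C_{jk}^{\alpha_k} \right).
\end{equation*}
The crucial observation is that every summand factorizes into a quantity depending only on $i$ times a quantity depending only on $j$. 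Concretely, for each multi-index $\alpha$ with $|\alpha| = p$ I would define $\g_\alpha \in \R^N$ by $(g_\alpha)_i = \prod_{k=1}^d B_{ik}^{\alpha_k}$ and $\vv_\alpha \in \R^M$ by $(v_\alpha)_j = \binom{p}{\alpha_1, \ldots, \alpha_d} \prod_{k=1}^d C_{jk}^{\alpha_k}$, which yields the decomposition
\begin{equation*}
\A^{\odot p} = \sum_{\alpha \in \mathbb{N}^d,\, |\alpha| = p} \g_\alpha \vv_\alpha^{\top},
\end{equation*}
a sum of rank-one matrices, one per multi-index. By the classic rank subadditivity inequality, $r(\A^{\odot p})$ is then bounded above by the number of such multi-indices.

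Finally I would count the multi-indices: the number of $\alpha \in \mathbb{N}^d$ with $|\alpha| = p$ is the stars-and-bars count $\binom{d+p-1}{p}$, giving $r(\A^{\odot p}) \leq \binom{d+p-1}{p}$. The remaining two bounds $r(\A^{\odot p}) \leq N$ and $r(\A^{\odot p}) \leq M$ are immediate since $\A^{\odot p} \in \R^{N \times M}$, and taking the minimum of the three establishes the claim. I do not expect a genuine obstacle here; the argument is essentially a bookkeeping exercise, and the only point requiring care is verifying that the stars-and-bars formula correctly enumerates the compositions of $p$ into $d$ nonnegative parts and that the factorization into $\g_\alpha$ and $\vv_\alpha$ is carried out consistently.
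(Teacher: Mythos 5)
Your proof is correct: the rank factorization, multinomial expansion into terms that factor as $(g_\alpha)_i (v_\alpha)_j$, stars-and-bars count of multi-indices, and rank subadditivity together give exactly the stated bound. Note that the paper itself offers no proof of this statement---it is imported verbatim from \citep{amini2012low}---and your argument is precisely the standard one used to establish the result in that reference, so there is nothing to compare beyond observing the agreement.
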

However, $\lim_{p \rightarrow \infty} \binom{d + p - 1}{p} = \infty, \forall d > 1$, so there is still hope we can find monomials that make a matrix full rank if we look at sufficiently large powers. The following novel result  proved in \cref{app:sq_fullrank} confirms in a particular case that this is almost surely achieved. Let $O_k^N = \{ \A \in \R^{N \times N} : r(\A) = k \}$ be the submanifold of $\R^{N \times N}$ consisting of rank $k$ matrices.
  
\begin{theorem}
 
\label{thm:sq_fullrank}
 
For $N > 1$, the pointwise function $f(x)=x^2$ makes matrices in $O_{N-1}^N$ to almost surely become full rank.
 
\end{theorem}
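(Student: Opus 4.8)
The plan is to recognize that $r(\A^{\odot 2}) < N$ is equivalent to $\det(\A^{\odot 2}) = 0$, and that $\A \mapsto \det(\A^{\odot 2})$ is the restriction to $O_{N-1}^N$ of a polynomial in the $N^2$ entries of $\A$: each entry $A_{ij}^2$ is a degree-$2$ polynomial and $\det$ a degree-$N$ polynomial in those, so $\det(\A^{\odot 2})$ has degree $2N$. A polynomial is real-analytic, and its restriction to the real-analytic submanifold $O_{N-1}^N$ (the smooth locus of the determinantal hypersurface, of dimension $N^2-1$) is again real-analytic. Hence, provided $O_{N-1}^N$ is connected, the classical dichotomy for real-analytic functions applies: $\det(\A^{\odot 2})|_{O_{N-1}^N}$ either vanishes identically or vanishes only on a set of ($(N^2-1)$-dimensional) measure zero. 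The theorem then reduces to exhibiting a single $\A^{\star} \in O_{N-1}^N$ with $\det((\A^{\star})^{\odot 2}) \neq 0$, which rules out the first alternative.

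First I would establish that $O_{N-1}^N$ is connected. Any rank-$(N-1)$ matrix factors as $\A = \B \C^{\top}$ with $\B, \C \in \R^{N\times(N-1)}$ of full column rank, and conversely any such product has rank exactly $N-1$. The set of full-column-rank $N\times(N-1)$ matrices is connected because $N-1 < N$ (the two-component, orientation-type obstruction only appears for square frames), so the domain $\{(\B,\C)\}$ is connected and $O_{N-1}^N$, being its continuous image under $(\B,\C)\mapsto\B\C^{\top}$, is connected as well. Equivalently, one may run the entire measure-zero argument upstairs: $(\B,\C)\mapsto\det((\B\C^{\top})^{\odot 2})$ is a nontrivial polynomial on the connected parameter space, its zero set has Lebesgue measure zero, and this pushes forward to a measure-zero set on $O_{N-1}^N$ in the natural (Hausdorff) measure class.

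The crux is the explicit construction of $\A^{\star}$. Fix distinct positive reals $0 < t_1 < \cdots < t_N$ and set $A^{\star}_{ij} = t_i^{\,j-1}$ for $j = 1,\ldots,N-1$ and $A^{\star}_{iN} = 1 + t_i$. Its last column equals the sum of the first two, while its first $N-1$ columns are the Vandermonde columns $t^0,\ldots,t^{N-2}$, which are independent; hence $r(\A^{\star}) = N-1$ exactly. Squaring entrywise sends the first $N-1$ columns to $t^0, t^2, t^4,\ldots,t^{2N-4}$ and the last to $(1+t_i)^2 = t^0 + 2\,t^1 + t^2$; subtracting the now-available $t^0$ and $t^2$ columns recovers $t^1$, so the column space of $(\A^{\star})^{\odot 2}$ is spanned by the $N$ power vectors with pairwise distinct exponents $\{0,1,2,4,6,\ldots,2N-4\}$. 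Because $t\mapsto t^{d}$ for distinct real exponents forms a Chebyshev system on $(0,\infty)$, the corresponding generalized Vandermonde matrix at the distinct positive nodes is nonsingular, so $(\A^{\star})^{\odot 2}$ has full rank $N$ and $\det((\A^{\star})^{\odot 2})\neq 0$. This construction needs $N\ge 3$; for $N=2$ the cited bound of~\citep{amini2012low} gives $r(\A^{\odot 2})\le\binom{2}{2}=1$ on all of $O_1^2$, so the statement genuinely requires $N\ge 3$.

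I expect the main obstacle to be the construction and full-rank verification of $\A^{\star}$ rather than the measure-zero machinery, which is routine once connectedness is in hand. The delicate point is that the rank deficiency of $\A$ must arise from a genuine multi-term linear relation among columns (here, column $N$ equals column $1$ plus column $2$): a relation that is a mere scalar proportionality between two columns is preserved under entrywise squaring and would leave $\A^{\odot 2}$ rank-deficient, which is exactly why duplicating a column fails and why the $N=2$ case is impossible. Proving nonsingularity of the resulting generalized Vandermonde (equivalently, that $\sum_k a_k\, t^{d_k}$ with distinct exponents $d_k$ has at most $N-1$ positive roots) is the one nontrivial ingredient I would need to cite or prove.
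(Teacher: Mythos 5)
Your proof is correct for $N \ge 3$ and takes a genuinely different route from the paper's. The paper argues by stratified dimension counting: it first discards the set $\Xi^N$ of matrices with two proportional columns, then encodes the failure of $\A^{\odot 2}$ to be full rank by coefficients $\alpha_1,\dots,\alpha_N$, shows via the quadratic form $\varphi(\x)=\x^{\top}\G\x$ and the pre-image theorem that for each fixed $\alpha$ the solution set lies in manifolds of dimension at most $N(N-2)$, and finally fibers over the $\alpha$'s to get total dimension $N + N(N-2) = N^2-N < N^2-1 = \dim O_{N-1}^N$. You instead note that the bad set is the zero locus of the single polynomial (hence real-analytic) function $\A \mapsto \det(\A^{\odot 2})$ restricted to the connected real-analytic manifold $O_{N-1}^N$ (your connectedness argument via the surjective parametrization $(\B,\C)\mapsto \B\C^{\top}$ is sound, since rank-deficient $N\times(N-1)$ matrices form a codimension-two set), so by the identity-theorem dichotomy it suffices to exhibit one witness; your perturbed Vandermonde $\A^{\star}$ with columns $t^0,\dots,t^{N-2},\,t^0+t^1$ does this, since squaring yields a column span containing the power vectors with distinct exponents $\{0,1,2,4,\dots,2N-4\}$, and nonsingularity of the resulting generalized Vandermonde at distinct positive nodes follows from Descartes' rule of signs (a nonzero combination $\sum_k a_k t^{d_k}$ has at most $N-1$ positive roots). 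Your route trades the paper's delicate dimension bookkeeping --- including its somewhat informal handling of the union over a continuum of $\alpha$'s via a fiber-bundle remark --- for one explicit construction plus standard genericity, and it generalizes immediately to other pointwise polynomials once a single witness is in hand; the paper's approach avoids needing any explicit example.

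You also caught a substantive point the paper misses: for $N=2$ the statement as written ($N>1$) is false, since every rank-one $2\times 2$ matrix is $\mathbf{u}\mathbf{v}^{\top}$ and so $\A^{\odot 2}=\mathbf{u}^{\odot 2}(\mathbf{v}^{\odot 2})^{\top}$ has rank at most one (consistent with the cited bound $\binom{d+p-1}{p}=1$ for $d=1$, $p=2$). Correspondingly, the paper's own proof silently breaks at $N=2$: there $\dim \Xi^2 = N(N-1)+1 = 3$ equals $\dim O_1^2 = 3$, so $\Xi^2$ cannot be discarded as a null set. The hypothesis should read $N\ge 3$, exactly as your witness construction requires.
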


%

\begin{figure}
\begin{center}
\includegraphics[scale=0.09]{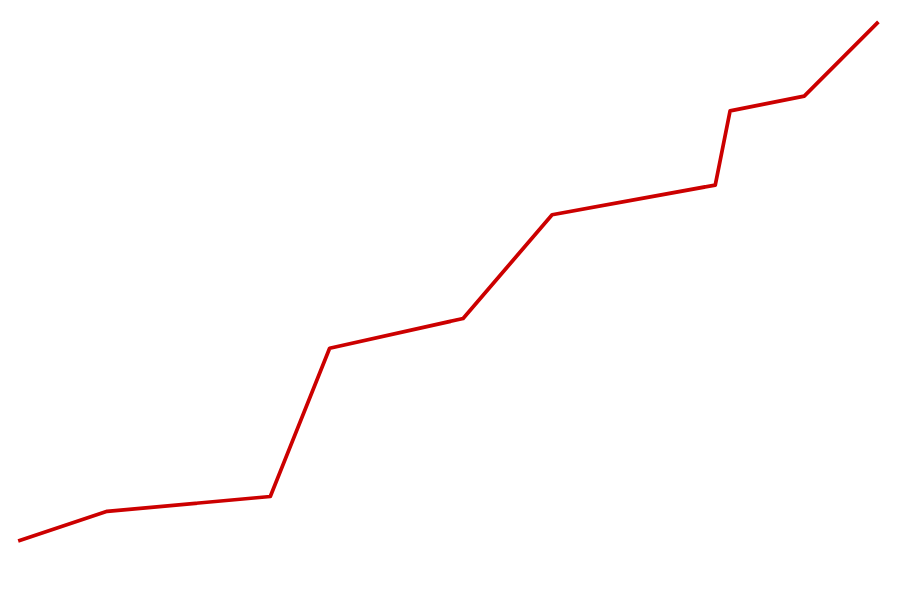}
\caption{Example of an increasing continuous piecewise linear function.}
\label{fig:mon}
\end{center}
\end{figure}

\paragraph{Architecture(s) of Learnable Monotonic Functions.} Even though some particular pointwise functions such as monomials/polynomials can make a low-rank matrix to be full-rank and, thus, remove the rank deficiency bottleneck, it is not guaranteed that this new matrix is close to the true data matrix. For this reason, we propose to learn parametric pointwise functions together with our model. For the reasons previously described, we design these functions to be bijective on full $\R$, increasing, continuous and (almost everywhere) differentiable. We note that the problem of learning parametric monotonic functions was analyzed by~\citep{sill1998monotonic}, but here we use the architecture proposed in~\citep{daniels2010monotone}, namely:
\begin{equation}
\label{eq:monotonic_model_K}
f(x) = \sum_{i=1}^K v_i \sigma(u_ix + b_i) + b 
\end{equation}
where $u_i,v_i,b_i,b \in \R, u_i,v_i \geq 0, \forall i \in \{1, \ldots , K\}$. This corresponds to an one hidden layer neural network with K hidden units and positively constraint weights (but not the biases).~\citep{daniels2010monotone} prove that this class of functions is universal approximator for all continuous increasing functions. However, in practice, one has to use a large number of hidden units $K$ in order to achieve a good approximation. While for our synthetic experiments in \cref{experiments} this was not an issue, for the real language modeling experiments this results in a heavy computational overhead. To understand why, in language modeling one has to process at a time large minibatches of contexts, meaning that matrices of size $N \times M$ \footnote{M is the vocabulary size, N is the number of contexts in a minibatch.} have to be stored in the GPU memory. If one uses the above architecture or the MoS architecture~\citep{yang2017breaking}, one has to store in the GPU memory and process intermediate tensors of dimension $N \times M \times K$, which can result in a significant running time and memory overhead even for small values of $K$ such as 10 or 15. This may lead to smaller batch sizes and thus impede the model's scalability.

To address the above computation problem, we propose to use an efficient class of parametric piecewise linear increasing functions called {\bf PLIF} = \textbf{P}iecewise \textbf{L}inear \textbf{I}ncreasing \textbf{F}unctions. An example is shown in \cref{fig:mon}. Formally, we fix a (large enough) interval $[-T,T]$ and K + 1 equally distanced knots in this interval: $l_i = -T + \frac{2Ti}{K}, \forall 0 \leq i \leq K$. We define our function to be piecewise linear, meaning that $f(x) = s_ix + b_i, \forall x \in [l_i, l_{i+1}]$, where $s_i$ is the slope of the linear function on the interval $[l_i, l_{i+1}]$. To enforce monotonicity, we need $s_i > 0$ which is achieved using the parametric form $s_i = \log(1 + \exp(v_i))$, where $v_i$ are unconstrained parameters. Moreover, we need the function to be continuous, meaning that $f$ has the same value in each knot $l_i$. This is achieved iff $\forall i > 0, b_i = b_0 + s_0l_0 - l_is_i + \frac{2T}{K} \sum_{j=0}^{i-1}s_j$. Thus, this model has as (learnable) parameters the values $s_i$ and the initial bias $b_0$. 

\paragraph{Computational Efficiency of PLIF.} The above formulation of the PLIF model is computationally efficient: i) a forward pass for an input $x$ is done by converting $x$ to the index $i_x : = \lfloor (x + T) \frac{K}{2T} \rfloor $, doing two lookups for index $i_x$ in the vectors $\textbf{s} := \{s_i:  0 \leq i \leq K\}$ and in $cumsum(\textbf{s})$, and then returning the value $s_{i_x}x + b_{i_x}$. ii) The backward pass only updates $s_{i_x}$ and $cumsum(\textbf{s})_{i_x}$, which have efficient implementations. Thus, the additional running time is negligible, while the additional memory only consists of two K-dimensional vectors and does not depend on the minibatch size like MoS or the model in \cref{eq:monotonic_model_K} do. For these reasons, we are computationally able to use large values of K (e.g. $10^5 - 10^6$) which offers great flexibility in modeling highly non-linear functions.

\paragraph{Universal Approximation Property of PLIF.} From a theoretical perspective, we state the following result and prove it in \cref{app:plif}:
\begin{theorem}
\label{thm:plif}
The PLIF model with large enough number of knots $K$ can approximate arbitrarily well any differentiable increasing real function defined on [-T,T] that has bounded derivatives.
\end{theorem}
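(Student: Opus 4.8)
The plan is to construct, for any admissible target function $g$, an explicit PLIF that interpolates $g$ at the $K+1$ knots, and then to bound the uniform approximation error in terms of the mesh size $2T/K$, which vanishes as $K\to\infty$.

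First I would fix a differentiable increasing $g:[-T,T]\to\R$ with $|g'(x)|\le L$ for all $x$. I would define the candidate PLIF $f$ to have, on each cell $[l_i,l_{i+1}]$, the secant slope $s_i=(g(l_{i+1})-g(l_i))/(l_{i+1}-l_i)$, and I would choose the free initial bias $b_0$ so that $f(l_0)=g(l_0)$. The continuity relation $b_i=b_0+s_0l_0-l_is_i+\tfrac{2T}{K}\sum_{j=0}^{i-1}s_j$ built into the PLIF definition then propagates these values, giving $f(l_i)=g(l_i)$ at every knot by induction; thus $f$ is exactly the continuous piecewise linear interpolant of $g$. Because $g$ is increasing, each secant slope satisfies $s_i\ge 0$, matching the monotonicity requirement.

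Second I would establish the error estimate. On a cell of width $h=2T/K$, both $g$ and the affine piece of $f$ are $L$-Lipschitz (for $f$ this follows from $|s_i|\le L$, a consequence of the mean value theorem applied to $g$). Writing the error $e(x)=g(x)-f(x)$, which vanishes at both endpoints, the triangle inequality $|e(x)|\le |g(x)-g(l_i)|+|f(x)-f(l_i)|\le 2L|x-l_i|\le 2Lh$ yields $\|g-f\|_\infty\le 2Lh=4LT/K$. (A sharper $Lh$ bound follows by using the closer of the two endpoints, and an even sharper one from Rolle's theorem and the modulus of continuity of $g'$, but the crude estimate already tends to $0$.) Hence, for any prescribed accuracy $\varepsilon$, taking $K>4LT/\varepsilon$ forces the uniform error below $\varepsilon$.

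I expect the main obstacle to be reconciling the interpolant with the strict-positivity constraint $s_i=\log(1+\exp(v_i))>0$ that the PLIF parametrization imposes: when $g$ is only weakly increasing, some secant slopes may equal $0$ and hence be unattainable. The fix is a perturbation argument: replace $g$ by $\tilde g(x)=g(x)+\eta x$ for a small $\eta>0$, which is strictly increasing with $\tilde g'\ge\eta>0$, so its interpolant has slopes $\ge\eta$ and is a genuine PLIF. Since $\|\tilde g-g\|_\infty\le\eta T$, choosing $\eta$ small, then $K$ large, and combining the two error contributions via the triangle inequality gives the claim. A secondary point worth checking is that $T$ is taken large enough that the relevant logits lie inside $[-T,T]$; this is assumed in the construction and is precisely why the statement restricts to a finite interval.
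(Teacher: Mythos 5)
Your proposal follows essentially the same route as the paper's proof: both construct the continuous piecewise linear interpolant of the target at the equispaced knots, bound the secant slopes by the derivative bound via the mean value theorem (the paper calls it Cauchy's theorem), and derive the uniform error bound of order $2R\cdot\frac{2T}{K}$ on each cell, which vanishes as $K\to\infty$. Your additional perturbation step addressing the strict-positivity constraint $s_i=\log(1+\exp(v_i))>0$ is a legitimate care point that the paper silently skips (it merely asserts the interpolant ``is a PLIF function''), though it is only needed for weakly increasing targets --- for strictly increasing $g$ the secant slopes are automatically positive, and even in the weak case the parametric slopes can approach $0$ arbitrarily closely, so the approximation claim survives either way.
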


\section{Experiments}
\label{experiments}

We empirically assess Linear-Softmax, Linear-Monotonic-Softmax (LMS) and Mixture of Softmaxes (MoS).

%

\begin{figure*}
\begin{center}
\includegraphics[scale=0.15]{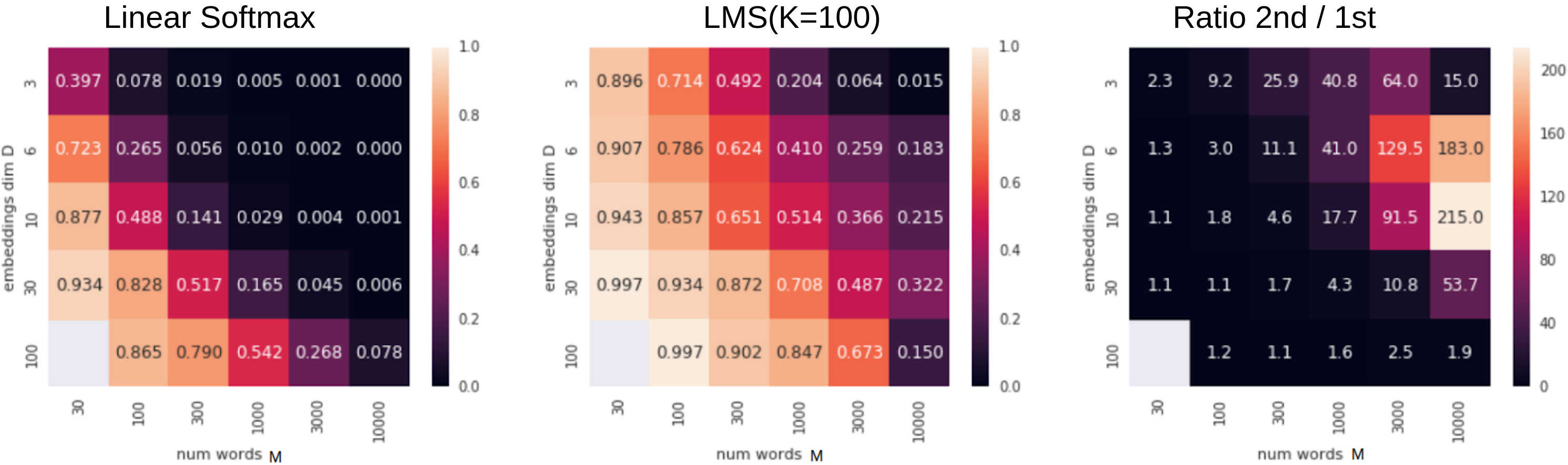}
\includegraphics[scale=0.155]{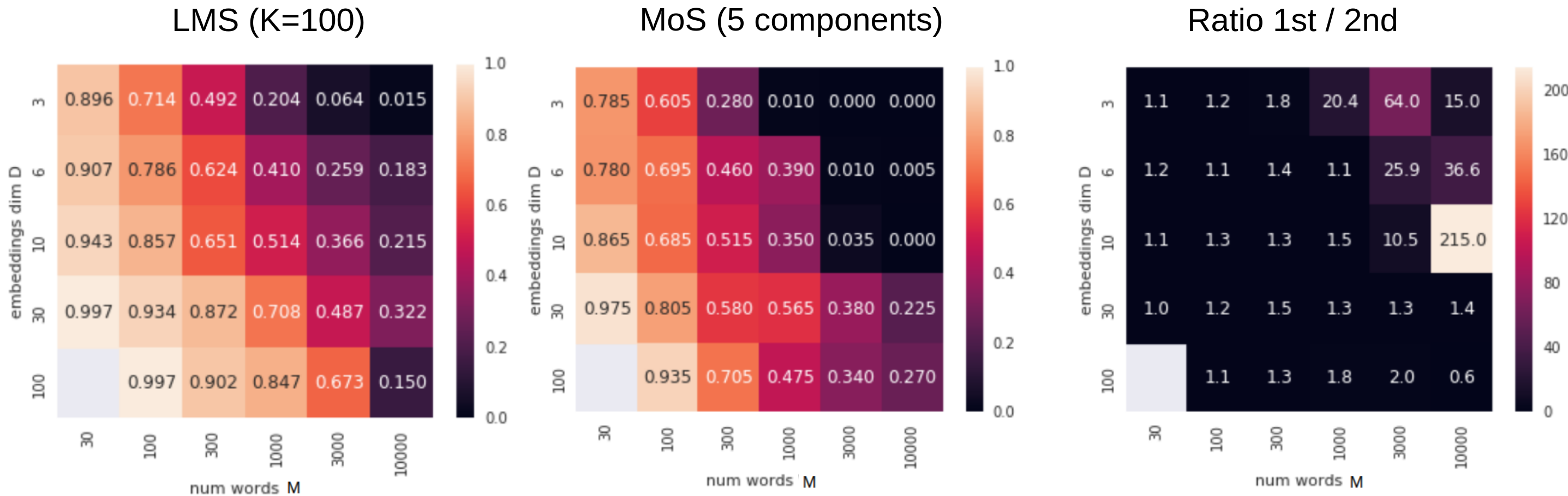}
\caption{Percentage of contexts $j$ for which the modes of true and parametric distributions match, i.e $\arg\max_{i} P^*(x_i|c_j) = \arg\max_{i} Q_{\Theta}(x_i|c_j)$. Higher the better. Dirichlet concentration $\alpha = 0.1$.}
\label{fig:argmax_0_1}
\end{center}
\end{figure*}

\begin{figure*}
\begin{center}
\includegraphics[scale=0.16]{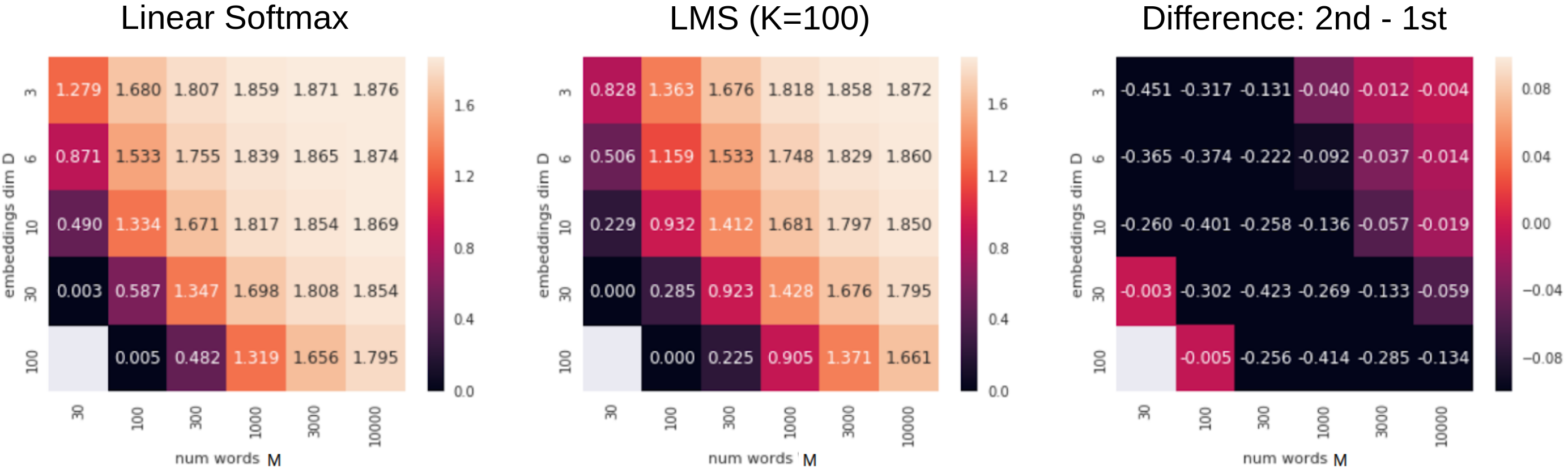}
\includegraphics[scale=0.165]{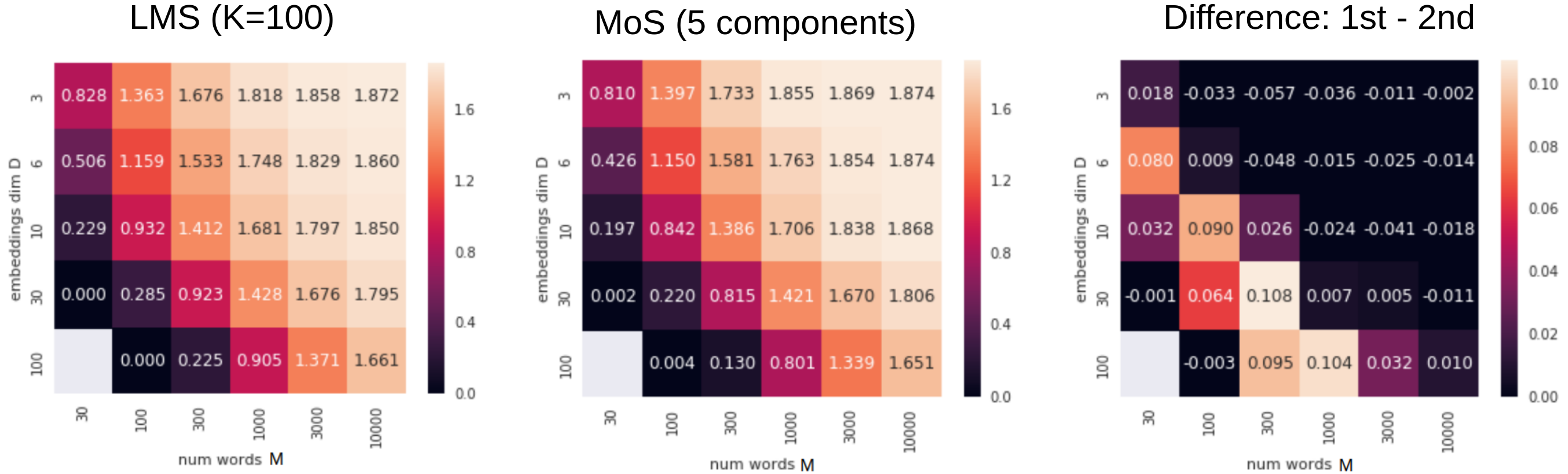}
\caption{Average $KL(P^* || Q_{\Theta})$ (across all contexts). Lower the better. Dirichlet concentration $\alpha = 0.1$. }
\label{fig:kl_0_1}
\end{center}
\end{figure*}

\begin{table*}[t]
\caption{Single model perplexities on validation and test sets on Penn Treebank and WikiText-2 datasets. For a fair comparison, baseline results are obtained by running the respective open-source implementations locally, however, being comparable to the published results. We also show the training time per epoch when using a single Tesla P100 GPU.}
\begin{center}
\begin{scriptsize}
\begin{sc}
\begin{tabular}{| c || c | c | c | c || c | c | c | c | c |}
\toprule
                      &         \multicolumn{4}{c||}{Penn Treebank}  & \multicolumn{4}{c|}{WikiText-2} \\  \hline

                      &   \#Param  &  Valid ppl  &   Test ppl &   \#sec/ep &   \#Param  &  Valid ppl  &   Test ppl &   \#sec/ep \\
\midrule

\begin{tabular}{@{}c@{}}Linear-Softmax\\\tiny{w/} AWD-LSTM, \tiny{w/o finetune}\\~\citep{merity2017regularizing}\end{tabular}
         & 24.2M  & 60.83 & 58.37 & $\sim$60 & 33M & 68.11 & 65.22 & $\sim$120 \\ \hline

\begin{tabular}{@{}c@{}}Ours LMS-PLIF, $10^5$ knots\\\tiny{w/} AWD-LSTM, \tiny{w/o finetune}\end{tabular}
        & 24.4M  & 59.45 & 57.25 & $\sim$70 & 33.2M & 67.87 & 64.86 & $\sim$150 \\ \hline

\begin{tabular}{@{}c@{}}MoS, K = 15\\\tiny{w/} AWD-LSTM, \tiny{w/o finetune}\\~\citep{yang2017breaking}\end{tabular}
         & 26.6M  & 58.58 & 56.43 & $\sim$150 & 33M & 66.01 & 63.33 & $\sim$550 \\ \hline

\begin{tabular}{@{}c@{}}MoS(15 comp) +\\our PLIF ($10^6$ knots)\\\tiny{w/} AWD-LSTM, \tiny{w/o finetune}\end{tabular}
         & 28.6M  & 58.20 & 56.02 & $\sim$220 & - & - & - & - \\ \hline

\end{tabular}
\end{sc}
\end{scriptsize}
\end{center}
\vskip -0.1in
\label{tab:results}
\end{table*}

\subsection{Synthetic Experiments}
We first explore a synthetic experimental setting that has the following advantages:
\begin{itemize}
\item allows to separate the Softmax bottleneck from other  bottlenecks, e.g. in the RNN context embedding layer.
\item allows to understand how powerful these models are to represent very different distributions using single low dimensional vectors, i.e. we remove the dependency between context vectors that happens when embedding contexts with a shared neural network.
\item allows to evaluate how well the modes of the true and the parametric distributions match, a metric of practical importance (e.g. for text generative models) that can be quantified only when given access to the true data distribution.
\end{itemize} 

To this end, we repeatedly sample $N$ different "true" discrete distributions over a fixed synthetic word vocabulary of size $M$. We use a Dirichlet prior with all concentration parameters equal to $\alpha$:
\begin{equation}
P^*(\cdot|c_j) \sim \text{Dir}(\alpha), \ \text{for\ } j=1, \ldots, N
\end{equation}
Larger $\alpha$'s result in close to uniform distributions, while low values result in sparse distributions. 
The effect of $\alpha$ is also shown for different values of $M$ in \cref{fig:dir_conc} from \cref{app:dir_conc}.

We learn parametric models $Q_{\Theta}(\cdot|c_j)$ to match the true $P^*$ distributions. We learn a set of word embeddings shared across all contexts and a separate context embedding per each distribution $Q_{\Theta}(\cdot|c_j)$. All embeddings have dimension $D$. We use the Linear-Softmax model as defined by \cref{eq:q}, the Mixture of Softmaxes (MoS) model~\citep{yang2017breaking}, and our LMS model given by \cref{eq:q_mon_f} with a pointwise monotonic function parameterized using the K hidden units architecture shown in \cref{eq:monotonic_model_K}. Learning the models' parameters is done by minimizing the cross entropy which is equivalent to minimizing the divergence $KL(P^* || Q_{\Theta})$ for each context $c_j$.

\paragraph{Results.} We present the results for different Dirichlet parameter $\alpha$, vocabulary sizes $M$, embedding sizes $D$ and evaluation metrics (mode matching and cross entropy / KL divergence) in \cref{fig:argmax_0_1,fig:kl_0_1}, but also show additional results in \cref{app:synthetic} in \cref{fig:argmax_0_01,fig:kl_0_01,fig:argmax_1}. In all the settings, we used $N=10^5$ contexts, where N is the number of different distributions $P^*(\cdot|c_j)$. 

\paragraph{Discussion.} We observe that, in most of the presented cases, LMS outperforms Linear-Softmax and MoS on both the task of mode matching and on the minimum achievable cross-entropy (KL divergence). Especially in the "low D - large M" setting, the difference is significantly large showcasing the existence of the Softmax bottleneck and the merits of our LMS model.  

However, we note that there is still room for future work and improvements, for example mode matching still largely suffers for low D and large M.

\subsection{Language Model Experiments}

We move to the real setting of language modeling. Here, due to computational reasons discussed in \cref{sec:mon_sol}, we will use our PLIF architecture introduced in the same section.

\paragraph{Datasets.} Following previous work \citep{mikolov2012statistical,inan2016tying,kim2016character,zoph2016neural}, we use the two most popular LM datasets: Penn TreeBank \citep{mikolov2010recurrent} and WikiText-2 \citep{merity2016pointer}. These datasets have word vocabulary sizes of 10,000 and 33,000.

\paragraph{Baselines.} We integrate our PLIF layer on top of the state of the art language models of AWD-LSTM~\citep{merity2017regularizing} and AWD-LSTM+MoS~\citep{yang2017breaking}. Additionally, our PLIF architecture can also be combined with MoS instead of standard Softmax. We call this model "MoS + PLIF". We use the AWD-LSTM open source implementation \footnote{\url{http://github.com/salesforce/awd-lstm-lm}}. All the models in \cref{tab:results} \footnote{Except MoS on WT2 which took too long to run on a single GPU, thus reporting the published results.} were ran locally and we report these results; we did this to understand how different Softmax models compare with each other when using the exact same context embedding architecture. We note that~\citep{yang2017breaking} redo architecture search after integrating their MoS model, their goal being to reduce the number of parameters to the same size as AWD-LSTM. 

All the models are trained without finetuning~\citep{merity2017regularizing}. We use embedding dimension 400 for all the models in \cref{tab:results}. For optimization, we use the strategy described in~\citep{merity2017regularizing} consisting of running stochastic gradient descent (SGD) with constant learning rate (20.0) until the cross entropy loss starts stabilizing, and then switching to averaged SGD. This strategy was shown to improve state of the art language models~\citep{takase2018direct} and to consistently and by a large margin outperform popular adaptive methods such as ADAM~\citep{kingma2014adam}.

We did not include the Sigsoftmax baseline model as no significant improvement over Linear-Softmax was seen, neither locally nor in the original paper~\citep{kanai2018sigsoftmax}  (the w/o finetune setting). We note that this method is a particular case of our LMS model.

\paragraph{Results and Discussion.} Table~\ref{tab:results} shows the results. Our LMS-PLIF layer consistently improves over Linear-Softmax when combined with the same state-of-the-art AWD-LSTM context embedding architecture. The  computational prices (memory and training time) we pay for using LMS-PLIF are negligible compared to Linear-Softmax. However, while MoS outperforms our simple LMS-PLIF model, it is computationally several orders of magnitude more expensive, which is a practical advantage of our method. Finally, combining MoS and our PLIF model gives the best Penn TreeBank result, outperforming all baselines (but at the highest computational cost).

We show in \cref{tab:example_f} statistics of the slope values of a learned PLIF function, revealing its highly non-linear nature.

\begin{figure}
\begin{center}
\includegraphics[scale=0.2]{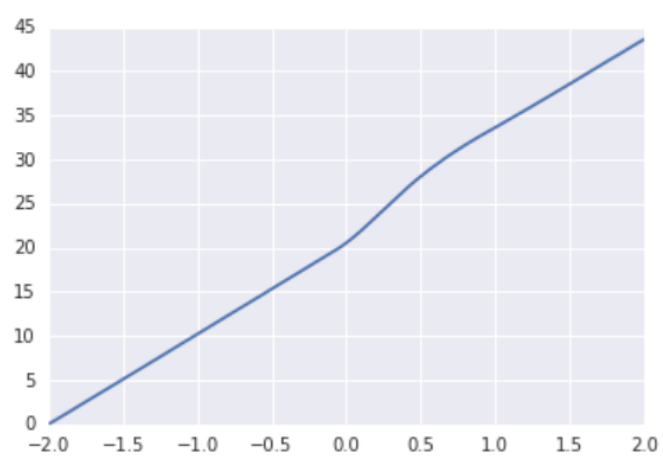}
\caption{Learned function for the model in \cref{tab:example_f}.}
\label{fig:plif}
\end{center}
\end{figure}

\begin{table}[t]
\caption{Statistics on the slope values of the PLIF pointwise function trained on WikiText-2.}
\begin{center}
\begin{scriptsize}
\begin{sc}
\begin{tabular}{| c | c | c | c |}
\toprule
Mean  &  Std  &   Min &   Max \\
\midrule
1.10 & 0.62 & 0.02 & 5.16 \\ \hline
\end{tabular}
\end{sc}
\end{scriptsize}
\end{center}
\vskip -0.1in
\label{tab:example_f}
\end{table}

\section{Conclusion}
\label{conclusion}
We re-analyzed the Softmax bottleneck here from multiple perspectives and confirmed, both theoretically and empirically, that the widely used Softmax layer is not flexible enough to model arbitrarily distributions over large vocabularies. We proposed LMS-PLIF, a model that learns parametric monotonic functions to make Softmax more flexible, and show some of its capabilities. 
%

\section*{Acknowledgements}
We thank Josip Djolonga for useful discussions and anonymous reviewers for suggestions. 

Octavian Ganea is funded by the Swiss National Science Foundation (SNSF) under grant agreement number 167176. Gary B\'ecigneul is funded by the Max Planck ETH Center for Learning Systems.


\bibliography{main.bib}
\bibliographystyle{icml2019}

\newpage
\appendix

\clearpage

\section{Proof of Theorem~\ref{thm:eckartyoung}}\label{app:eckartyoung}
\begin{proof}
We derived in the main text that $r(\A_{Q_{\Theta}}) \leq d+1$. In addition, Eckart-Young-Mirsky theorem gives:
\begin{equation*}
\|\A_{P^*} - \B\|_F^2 \geq \sqrt{\sigma_{d+2}^2 + \ldots + \sigma_M^2} ,
\end{equation*}
\begin{equation*}
\forall \B \in \R^{M \times N}\ s.t.\ r(\B) \leq d+1
\end{equation*}
Thus, our result follows for $\B=\A_{Q_{\Theta}}$.
\end{proof}


\section{Proof of Theorem~\ref{thm:maxentropy}}\label{app:maxentropy}
\begin{proof}
i) Using the non-negativity property of the KL divergence, one derives:
\begin{equation}
KL(R\| Q) = H(R,Q) - H(R) \geq 0
\label{eq:klnonneg}
\end{equation}
for any probability distribution $R$. The result follows easily by taking $R = P^*$.\\

ii) Let $Q_{\h}(x_i) \propto \exp(\inp{\w_i}{\h})$. Then, for any probability distribution $R$, it is straightforward to derive that
\begin{equation}
H(R,Q_{\h}) = - \inp{\E_{R}[\w]}{\h} + \log Z^{(\h)}
\label{eq:log_cross_ent}
\end{equation}
Moreover, if $R \in \mathcal{P^*}$ is any distribution satisfying the d-dimensional linear constraints,  one derives from \cref{eq:log_cross_ent} that
\begin{equation}
H(P^*,Q_{\h}) = H(R,Q_{\h}),\ \forall R \in \mathcal{P^*}
\label{eq:equal_x_ents}
\end{equation}
combining \cref{eq:equal_x_ents,eq:klnonneg}, we get:
\begin{equation}
H(P^*,Q_{\h}) \geq H(R), \ \forall R \in \mathcal{P^*}
\end{equation}
thus 
\begin{equation}
H(P^*,Q_{\h}) \geq max_{R \in \mathcal{P^*}} H(R)
\end{equation}
which, since $Q_{\h}$ is arbitrary in the above exponential family, implies that
\begin{equation}
min_{\h} H(P^*,Q_{\h}) \geq max_{R \in \mathcal{P^*}} H(R)
\end{equation}

We are only left with proving the reverse, namely that $min_{\h} H(P^*,Q_{\h}) \leq max_{R \in \mathcal{P^*}} H(R)$. We use the standard derivations for the Maximum Entropy Principle, namely we form the Lagrangian:
\begin{equation}
\begin{split}
L(\bm{\lambda}, \beta, \h) := H(R) + & \beta \left(\sum_{i=1}^M R(x_i) - 1\right) + \\
 & + \inp{\bm{\lambda}}{\E_{R}[\w] - \E_{P^*}[\w]} 
\end{split}
\end{equation}
Setting its derivatives to 0, one gets that the optimal $R^* = \arg\max_{R \in \mathcal{P^*}} H(R)$ has the form
\begin{equation}
R^*(x_i) \propto \exp(\inp{\w_i}{\bm{\lambda}^*})
\end{equation}
for some $\bm{\lambda}^* \in \R^d$ that is chosen by solving the d-linear system $\E_{R^*}[\w] - \E_{P^*}[\w] = 0$. One can observe that $Q_{\bm{\lambda}^*} = R^*$, getting
\begin{equation*}
min_{\h} H(P^*,Q_{\h}) \leq H(P^*, Q_{\bm{\lambda}^*}) = H(P^*, R^*)
\end{equation*}
Finally, using \cref{eq:equal_x_ents}, we get:
\begin{equation*}
H(P^*, R^*) = H(R^*, R^*) = H(R^*) = max_{R \in \mathcal{P^*}} H(R)
\end{equation*}
which concludes the proof.
\end{proof}

\section{Proof of Theorem~\ref{thm:any_f_then_monotone}}\label{app:any_f_then_monotone}
\begin{proof}
Since $f(\A)$ has rank at least $K$, there exists at least one submatrix $\M \in \R^{K \times K}$ of $\A$ such that $\det(f(\M)) \neq 0$. Let $b_1 < b_2 < \ldots < b_T$ be all the distinct values of $\M$. Denote by $\epsilon = \frac{1}{4} \min_{i > 1} |b_i - b_{i-1}|$. We first prove the following lemmas.

\begin{lemma}
\label{lemma:poly_zero}
Let $P \in \R[X_1, \ldots, X_T]$ be a multivariate polynomial with real coefficients. Assume there exist infinite sets $S_1, \ldots , S_T$ such that P vanishes on all the points of $S_1 \times S_2 \times \ldots \times S_T$. Then P vanishes on any point of $\R^T$.
\end{lemma}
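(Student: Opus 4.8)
The plan is to prove the statement by induction on the number of variables $T$. For the base case $T=1$, I would use the fundamental fact that a nonzero univariate real polynomial of degree $n$ has at most $n$ roots. Since $P$ vanishes on the infinite set $S_1$, it would have infinitely many roots, which forces $P$ to be the zero polynomial; hence it vanishes on all of $\R$.

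For the inductive step, assuming the claim for $T-1$ variables, I would regard $P$ as a polynomial in the last variable $X_T$ with coefficients lying in $\R[X_1,\ldots,X_{T-1}]$, writing
\[
P(X_1,\ldots,X_T) = \sum_{k=0}^{n} c_k(X_1,\ldots,X_{T-1})\, X_T^{\,k}.
\]
The key observation is that for each \emph{fixed} tuple $(a_1,\ldots,a_{T-1}) \in S_1 \times \cdots \times S_{T-1}$, the univariate polynomial $X_T \mapsto P(a_1,\ldots,a_{T-1},X_T)$ vanishes on the infinite set $S_T$. By the base-case argument this univariate polynomial is identically zero, so each of its coefficients must vanish, i.e. $c_k(a_1,\ldots,a_{T-1}) = 0$ for every $k$.

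Since the tuple $(a_1,\ldots,a_{T-1})$ was arbitrary in $S_1 \times \cdots \times S_{T-1}$, each coefficient polynomial $c_k$ vanishes on a product of $T-1$ infinite sets. Applying the induction hypothesis to each $c_k$ (a polynomial in $T-1$ variables), I conclude that every $c_k$ is identically zero on $\R^{T-1}$, and therefore $P$ is identically zero on $\R^{T}$.

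The argument is essentially routine, and I do not anticipate a genuine obstacle. The only point requiring care is organizing the induction cleanly: peeling off exactly one variable at each step and repeatedly invoking that a univariate polynomial with infinitely many roots must be the zero polynomial. One should also note that the coefficients $c_k$ are themselves bona fide polynomials in the remaining $T-1$ variables, so that the induction hypothesis genuinely applies to them.
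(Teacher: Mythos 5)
Your proof is correct and matches the paper's argument essentially verbatim: both proceed by induction on $T$, fix a tuple in the product of all but one of the infinite sets, use the finitely-many-roots fact to kill the resulting univariate polynomial, and then apply the induction hypothesis to the coefficient polynomials. The only cosmetic difference is that you peel off the last variable $X_T$ while the paper peels off the first variable $X_1$.
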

\begin{proof}
We prove this by induction over $T$. The result easily holds for $T = 1$ since a real univariate non-zero polynomial can only have a finite set of roots. Assume now that the result holds for any polynomial in $T-1$ variables. We can write $P(X_1, X_2, \ldots , X_T)$ as a univariate polynomial in $X_1$ with coefficients polynomials in $\R[X_2, \ldots, X_T]$ as follows: $P(X_1, X_2, \ldots , X_T) = \sum_{i=0}^{d_1} Q_i(X_2, \ldots , X_T) X_1^i$, where $d_1$ is the maximum degree of $X_1$. For any arbitrary $x_2, \ldots , x_T \in S_2 \times \ldots \times S_T$, we know from the hypothesis that $P(c, x_2, \ldots , x_T) = 0, \forall c \in S_1$. Since $S_1$ is infinite we have that the univariate polynomial in $X_1$ is identical 0, i.e. $P(X, x_2, \ldots , x_T) \equiv 0$, which implies that $Q_i(x_2, \ldots , x_T) = 0$. However, $x_2, \ldots , x_T \in S_2 \times \ldots \times S_T$ were chosen arbitrarily, thus $Q_i(x_2, \ldots , x_T) = 0, \forall x_2, \ldots , x_T \in S_2 \times \ldots \times S_T$. Applying the induction hypothesis for $T - 1$, one gets that all $Q_i$ vanish on the full $\R^{T-1}$. Thus, $P(X, x_2, \ldots , x_T) \equiv 0, \forall (x_2 , \ldots, x_T) \in \R^{T-1}$, which implies that $P(x_1, x_2, \ldots , x_T) = 0, \forall (x_1, x_2 , \ldots, x_T) \in \R^T$.
\end{proof}

\begin{lemma}
\label{lemma:existence_c_i}
There exist $c_i \in [b_i - \epsilon, b_i + \epsilon], \forall i \in \{1, \ldots, T \}$ s.t. given any pointwise function $h$ satisfying $h(b_i) = c_i, \forall 1 \leq i \leq T$, we have $\det(h(\M)) \neq 0$.
\end{lemma}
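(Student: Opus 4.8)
The plan is to regard $\det(h(\M))$ as a function of only the $T$ values that $h$ takes on $\{b_1, \ldots, b_T\}$, and then apply Lemma~\ref{lemma:poly_zero}. First I would introduce $T$ formal variables $X_1, \ldots, X_T$ and define $\M'(X_1, \ldots, X_T)$ to be the matrix obtained from $\M$ by replacing every entry equal to $b_k$ with $X_k$; this is well defined because $b_1, \ldots, b_T$ are exactly the distinct entries of $\M$. Setting $D(X_1, \ldots, X_T) := \det(\M'(X_1, \ldots, X_T))$ yields a polynomial in $\R[X_1, \ldots, X_T]$, since the determinant is a polynomial in the matrix entries and each entry is one of the $X_k$.

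The key observation is that evaluating $D$ at $(f(b_1), \ldots, f(b_T))$ reconstructs the matrix $f(\M)$, so $D(f(b_1), \ldots, f(b_T)) = \det(f(\M)) \neq 0$ by hypothesis. Hence $D$ is not the zero polynomial. I would then invoke the contrapositive of Lemma~\ref{lemma:poly_zero}: taking the infinite sets $S_i := [b_i - \epsilon, b_i + \epsilon]$, a nonzero polynomial cannot vanish on all of $S_1 \times \cdots \times S_T$, so there must exist a point $(c_1, \ldots, c_T) \in S_1 \times \cdots \times S_T$ with $D(c_1, \ldots, c_T) \neq 0$. This produces the desired $c_i \in [b_i - \epsilon, b_i + \epsilon]$.

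Finally I would close the argument by the same reconstruction identity: for any pointwise $h$ with $h(b_i) = c_i$ for all $i$, the matrix $h(\M)$ equals $\M'(c_1, \ldots, c_T)$, so $\det(h(\M)) = D(c_1, \ldots, c_T) \neq 0$, as claimed. The only genuinely substantive step is recognizing that $\det(h(\M))$ depends on $h$ solely through the tuple $(h(b_1), \ldots, h(b_T))$ and that this dependence is polynomial, which is exactly what lets Lemma~\ref{lemma:poly_zero} apply; everything else is bookkeeping.

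I note that the precise value $\epsilon = \frac{1}{4}\min_{i>1}|b_i - b_{i-1}|$ is not actually needed for the existence claim here, where any nonempty intervals would suffice. It matters only in the subsequent step of the proof of Theorem~\ref{thm:any_f_then_monotone}: the gap bound $c_{i+1} - c_i \geq (b_{i+1} - b_i) - 2\epsilon \geq 2\epsilon > 0$ forces $c_1 < \cdots < c_T$, which is what will permit the construction of a strictly increasing (continuous, piecewise differentiable) interpolant $g$ sending $b_i \mapsto c_i$.
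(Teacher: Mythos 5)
Your proposal is correct and is essentially the paper's own argument: you express $\det(h(\M))$ as a fixed polynomial $D \in \R[X_1,\ldots,X_T]$ evaluated at $(h(b_1),\ldots,h(b_T))$ (the paper does the same via the Leibniz formula) and apply Lemma~\ref{lemma:poly_zero} to the sets $S_i = [b_i-\epsilon, b_i+\epsilon]$, merely stating as a contrapositive what the paper runs as a proof by contradiction. Your closing remark is also accurate: the specific value of $\epsilon$ is irrelevant to this lemma and matters only later, where it guarantees $c_1 < \cdots < c_T$ so that a strictly increasing interpolant $g$ with $g(b_i)=c_i$ can be built.
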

\begin{proof}
Assume the contrary, that $\forall c_i \in [b_i - \epsilon, b_i + \epsilon]$, $\det(h(\M)) = 0$. 

We note that, using the  Leibniz formula of the determinant, one easily sees that $\det(\M)$ can be written as $P(b_1, \ldots, b_T)$, where $P \in \R[X_1, \ldots, X_T]$ is a multivariate polynomial in T variables. It is then easy to see that any pointwise $h$ will change the determinant of M as: $\det(h(\M)) = P(h(b_1), \ldots, h(b_T))$. Then, assuming this lemma is not true is equivalent with $P(c_1, \ldots, c_T) = 0, \forall c_i \in [b_i - \epsilon, b_i + \epsilon], \forall 1 \leq i \leq T$. Applying \cref{lemma:poly_zero} to sets $S_i = [b_i - \epsilon, b_i + \epsilon]$, one gets that $P(c_1, \ldots, c_T) = 0, \forall c_i \in \R, \forall i \in \{1, \ldots, T \}$. Taking $c_i = f(b_i)$ one obtains $\det(f(\M)) = P(f(b_1), \ldots, f(b_T)) = 0$ which is a contradiction with our assumption on $\M$ and $f$.
\end{proof}

We now return to the proof of the main theorem. For each $i \in \{1, \ldots, T \}$, let us denote by $c_i \in [b_i - \epsilon, b_i + \epsilon]$ the values from \cref{lemma:existence_c_i} that guarantee a non-zero determinant. We construct a pointwise bijective, piecewise differentiable, continuous and strictly increasing function $g : \R \rightarrow \R$ such that $g(b_i) = c_i$. It is obvious that $\det(g(\M))$ depends only on the values $g(b_i)$, so we are free to assign any other values to any other real input of $g$ as long as the above constraints on $g$ are satisfied. One example of such $g$ is a piecewise linear function defined  to match the following values: $g(b_i) = c_i, g(b_i + 2\epsilon) = b_i + 2\epsilon, \forall 1 \leq i \leq T, g(x) = x, \forall x < b_1 - 2\epsilon$ and  $g(x) = x, \forall x > b_T + 2\epsilon$. It can be easily seen that such a function is bijective, piecewise differentiable, continuous and strictly increasing.
\end{proof}


\section{Proof of Lemma~\ref{thm:distinct_dot_products}}\label{app:distinct_dot_products}
\begin{proof}
If $\inp{\w_i}{\h_{j_i}}$ are distinct from all the other entries in the matrix A, one can design the following pointwise function:

\begin{equation*}
f(x)=
    \begin{cases}
      1 & \text{if}\ \exists i\ \text{s.t.\ } x = \inp{\w_i}{\h_{j_i}} \\
      0 & \text{else}
    \end{cases}
\end{equation*}

Then, let $\B$ be the $M \times M$ submatrix of A consisting of all its M rows and the M columns indexed by $j_i$'s. It is then clear that $f(\B) = \I_M$, which is obviously full rank.
\end{proof}


\section{Proof of Theorem~\ref{thm:sq_fullrank}}\label{app:sq_fullrank}
 
\begin{proof}

We will make use of the following folklore lemmas:
 
\begin{lemma}
 
\label{lemma:finite_union}
Let $\mathcal{M}=\cup_i M_i$ be a finite union of Riemannian manifolds of dimension $m$, embedded in $\R^k$, with Riemannian metric $g_i$ inherited from $\R^k$. Then, any finite union $S$ of submanifolds of the $\mathcal{M}_i$'s of dimensions strictly smaller than $m$ is a set of null measure\footnote{w.r.t. the volume form of the manifold, \textit{i.e.} locally w.r.t. to the $m$-dimensional Lebesgue measure.}. In other words, any point from $\mathcal{M}$ is almost surely not in $S$.
\end{lemma}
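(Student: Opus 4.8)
The plan is to reduce this global statement to an elementary local fact about Lebesgue measure through a sequence of standard reductions, and then invoke the fact that $C^1$ maps do not raise measure-dimension. Since a finite union of null sets is null, and $S$ is by hypothesis a finite union of submanifolds, it suffices to prove the claim for a single submanifold $N$ of dimension $m' < m$ sitting inside a single $m$-dimensional piece $\mathcal{M}_i$. Moreover, being of null measure is a local property: each $\mathcal{M}_i$ is second countable and hence admits a countable atlas of charts $\{(U_\alpha, \varphi_\alpha)\}$, and $N$ is null iff each $N \cap U_\alpha$ is null in its chart. So I would fix one chart and argue there.

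Second, I would replace the Riemannian volume measure by ordinary Lebesgue measure within the chart. In coordinates $\varphi_\alpha : U_\alpha \to V_\alpha \subseteq \R^m$, the volume measure has density $\sqrt{\det g_i}$ against $dx^1 \cdots dx^m$; since the metric is inherited from $\R^k$ this density is continuous and strictly positive, hence bounded above and below by positive constants on any relatively compact sub-chart. Therefore the pushforward of the volume measure and the $m$-dimensional Lebesgue measure on $V_\alpha$ are mutually absolutely continuous, so they share the same null sets. It thus suffices to show that $\varphi_\alpha(N \cap U_\alpha)$ has $m$-dimensional Lebesgue measure zero in $\R^m$.

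Third, in this chart the submanifold is locally a $C^1$ image of a lower-dimensional domain: for an embedded submanifold one chooses adapted coordinates in which $N$ appears as a coordinate slice $\R^{m'} \times \{0\}$, while for an immersed submanifold each point has a neighborhood realized as the image of a single embedding. In either case $\varphi_\alpha(N \cap U_\alpha)$ lies in $\psi(W)$ for some $C^1$ map $\psi : W \subseteq \R^{m'} \to \R^m$. Extending $\psi$ to $\tilde\psi(y,t) = \psi(y)$ on $W \times \R^{m-m'}$, we have $\psi(W) = \tilde\psi\!\left(W \times \{0\}\right)$, where $W \times \{0\}$ already has $m$-dimensional Lebesgue measure zero; since $\tilde\psi$ is locally Lipschitz, its image remains null, which closes the argument.

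The only point requiring genuine care is this last one — that a locally Lipschitz map sends a Lebesgue-null set to a Lebesgue-null set — together with stating the meaning of ``submanifold'' precisely enough that it is locally a $C^1$ image of a lower-dimensional domain. I would spell out the covering estimate explicitly: cover $W \times \{0\}$ by cubes of arbitrarily small total volume, note that a map with Lipschitz constant $L$ carries a cube of side $\delta$ into a set of diameter at most $L\sqrt{m}\,\delta$ and hence into a cube of volume at most $(L\sqrt{m}\,\delta)^m$, and sum to obtain an image of arbitrarily small outer measure. Everything else — finite subadditivity, the chart bookkeeping, and the equivalence of the volume and Lebesgue measures — is routine.
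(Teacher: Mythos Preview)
Your argument is correct and follows the same route as the paper's proof, which is only a two-line sketch: a lower-dimensional submanifold has measure zero, and a finite union of null sets is null. You have simply filled in the details (chart reduction, equivalence of the Riemannian volume and Lebesgue measures, and the Lipschitz covering estimate) that the paper leaves as folklore.
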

 
\begin{proof}
 
(sketch) any submanifold of $\mathcal{M}$ of strictly smaller dimension than $m$ has volume or measure zero. The result then follows from the fact that a finite union of sets of measure zero has also measure zero.
 
\end{proof}

\begin{lemma}
 
\label{lemma:rank_d_manifold}
 
The set $O^N_k$ of rank-$k$ matrices of size $N\times N$ with $0 < k < N$ is a Riemannian manifold of dimension $2kN - k^2$ embedded in $\R^{N \times N}$.
 
\end{lemma}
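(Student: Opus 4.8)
The plan is to realize $O^N_k$ as a smooth embedded submanifold of $\R^{N\times N}$ by constructing explicit local charts around each point via the Schur complement, and then to obtain the Riemannian structure for free by restricting the ambient Euclidean inner product to the tangent spaces. Fix an arbitrary $\A_0 \in O^N_k$. Since $\A_0$ has rank $k$, some $k\times k$ submatrix of $\A_0$ is invertible; because permuting rows and columns is a linear diffeomorphism of $\R^{N\times N}$ that maps $O^N_k$ onto itself, I may assume without loss of generality that the top-left $k\times k$ block of $\A_0$ is invertible. Writing a general matrix in the block form
\begin{equation*}
\A = \begin{pmatrix} \B & \C \\ \mathbf{D} & \mathbf{E} \end{pmatrix}, \quad \B \in \R^{k\times k}, \ \mathbf{E} \in \R^{(N-k)\times(N-k)},
\end{equation*}
I restrict attention to the open neighborhood $U = \{\A : \det \B \neq 0\}$ of $\A_0$.

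On $U$ the standard block factorization expresses $\A$ as a product of an invertible lower-triangular factor, the block-diagonal matrix $\mathrm{diag}(\B,\, \mathbf{E} - \mathbf{D}\B^{-1}\C)$, and an invertible upper-triangular factor, whence $r(\A) = k + r(\mathbf{E} - \mathbf{D}\B^{-1}\C)$ for every $\A \in U$. Consequently $\A$ lies in $O^N_k$ precisely when the Schur complement vanishes. This motivates defining the smooth map $\Phi : U \to \R^{(N-k)\times(N-k)}$ by $\Phi(\A) = \mathbf{E} - \mathbf{D}\B^{-1}\C$, so that $O^N_k \cap U = \Phi^{-1}(\mathbf 0)$.

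The crux is to verify that $\mathbf 0$ is a regular value of $\Phi$, i.e. that $\Phi$ is a submersion on $U$. Holding $\B,\C,\mathbf{D}$ fixed and differentiating in the $\mathbf{E}$ directions, the partial differential of $\Phi$ with respect to $\mathbf{E}$ is the identity on $\R^{(N-k)\times(N-k)}$, so the full differential $D\Phi$ is surjective at every point of $U$. By the regular value (preimage) theorem, $\Phi^{-1}(\mathbf 0) = O^N_k \cap U$ is an embedded submanifold of $U$ of codimension $(N-k)^2$, hence of dimension $N^2 - (N-k)^2 = 2kN - k^2$. Since $\A_0$ was arbitrary and the row/column permutations furnish such a chart around every point, these charts cover $O^N_k$ and are all restrictions of the single ambient smooth structure of $\R^{N\times N}$; therefore $O^N_k$ is a smooth embedded submanifold of $\R^{N\times N}$ of dimension $2kN - k^2$. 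Restricting the canonical inner product on $\R^{N\times N}$ to each tangent space then defines a smooth Riemannian metric, as asserted.

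I expect the main obstacle to be the careful justification that the rank is \emph{exactly} $k$ on $U$ iff the Schur complement is zero (using that $\B$ is already invertible, so $r(\A)\geq k$ automatically) together with the submersion claim; once these are in place the dimension count and the passage to the embedded (not merely immersed) structure are routine. As an alternative one could instead identify $O^N_k$ with a single orbit of the $GL_N(\R)\times GL_N(\R)$ action $(\mathbf P,\mathbf Q)\cdot\A = \mathbf P\A\mathbf Q^{-1}$ and compute its dimension from the stabilizer of a fixed rank-$k$ representative, but the Schur-complement charts give the embeddedness most directly.
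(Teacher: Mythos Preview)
Your argument is correct. The Schur-complement map $\Phi(\A)=\mathbf{E}-\mathbf{D}\B^{-1}\C$ is smooth on the open set $U=\{\det\B\neq 0\}$, its differential in the $\mathbf{E}$-direction is the identity, so $\Phi$ is a submersion and $\Phi^{-1}(\mathbf{0})=O^N_k\cap U$ is an embedded submanifold of codimension $(N-k)^2$; the dimension count $N^2-(N-k)^2=2kN-k^2$ and the inherited Riemannian metric are then immediate. The only point worth stating a bit more explicitly is that the row/column permutations you invoke are global linear isomorphisms of $\R^{N\times N}$, so the local submanifold patches they produce are automatically smoothly compatible, giving a single embedded submanifold rather than merely a collection of local pieces.

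Regarding comparison with the paper: the paper does not actually prove this lemma. It simply cites an external reference and remarks that the Riemannian metric is the restriction of the ambient Euclidean one. Your proposal therefore supplies strictly more than the paper does: a self-contained, elementary proof via the regular value theorem. The alternative you mention at the end (realizing $O^N_k$ as a single $GL_N(\R)\times GL_N(\R)$-orbit and computing the stabilizer dimension) is closer in spirit to what one often finds in the cited literature, but your Schur-complement charts have the advantage of giving embeddedness and the dimension in one stroke without any Lie-theoretic machinery.
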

 
\begin{proof}
 
See e.g.~\cite{shalit2012online}. The Riemannian metric for embedded manifolds is simply the Euclidean metric restricted to the manifold.
 
\end{proof}

We now return to the main proof of the theorem. From \cref{lemma:rank_d_manifold} we have that $\dim(O_{N-1}^N) = N^2 - 1$. We want to prove that the subset of $O_{N-1}^N$ of rank $N-1$ matrices for which $x^2$ is not increasing their rank has dimension strictly smaller than $\dim(O_{N-1}^N)$. In this case, using \cref{lemma:finite_union}, the measure of all ill-behaved matrices would be 0, so any matrix from $O_{N-1}^N$ is almost surely well-behaved, i.e. the rank of $\A^{\odot 2}$ is almost surely full rank $N$ for $\A \in O_{N-1}^N$.

We begin by removing from $O_{N-1}^N$ the set of all matrices that have two proportional columns, a set that we name $\Xi^N$. This is a finite\footnote{More precisely, of $\frac{N(N-1)}{2}$ manifolds, one per each pair of columns.} union of manifolds of dimension $N(N-1)+1$, namely all sets of matrices for which column i is proportional to column j, for all $1 \leq i < j \leq N $ \footnote{The $N(N-1)+1$ dimension comes from the fact that there are N-1 independent columns, plus a scalar, namely the multiplication factor between column i and column j.}. Using \cref{lemma:finite_union}, we derive that the measure or volume of $\Xi^N$ is 0.

Now, for any arbitrary $\A \in O_{N-1}^N \setminus \Xi^N$ with columns $\x^{(1)} , \ldots, \x^{(N)} \in \R^N$, one can easily derive that $\exists \gamma_i \in \R$ not all equal to 0 s.t. $\sum_{i=1}^N \gamma_i \x^{(i)} = 0$. We know that at least one $\gamma_i \neq 0$ from the fact that $\A \in O_{N-1}^N$; let us denote by $\Gamma^i$ the set of such matrices $\A \in O_{N-1}^N$. Since $O_{N-1}^N$ is the (finite) union of the $\Gamma^i$'s, we want to show that the set of ill-behaved matrices in each $\Gamma^i$ is contained in a manifold of dimension strictly smaller than that of $O_{N-1}^N$, which will conclude, using the fact that a finite union of null measure sets has null measure. 

Without loss of generality, let us assume that $\A\in \Gamma^N$, \textit{i.e.} that $\gamma_N\neq 0$. Let us note that 
\begin{equation}
\Gamma^N = \{\A \in O_{N-1}^N : \gamma_N  = 1 \},
\end{equation}
by substituting each $\gamma_i$ with $\gamma_i/\gamma_N$ for $1\leq i\leq N-1$.

If $\A^{\odot 2}$ is not full rank, there exist $\alpha_1,...,\alpha_{N}\in\R$ such that 
\begin{equation}\label{eq:blabla}
\sum_{i=1}^{N-1} \alpha_i (\x^{(i)})^{\odot 2} = \alpha_N\left(\sum_{i=1}^{N-1} \gamma_i \x^{(i)}\right)^{\odot 2}.
\end{equation}

For fixed $\alpha_1,...,\alpha_{N}\in\R$, denote by $M_\alpha$ the subset of the solutions $\{\x^{(i)}\}_{1\leq i\leq N-1}\subset\R^N$ of the above equation.

Define 
\begin{multline}
\varphi: (x^{(1)}_k,...,x^{(N-1)}_k) \in\R^{N-1} \mapsto \\
 \sum_{i=1}^{N-1} \alpha_i (x^{(i)}_k)^{ 2} - \alpha_N\left(\sum_{i=1}^{N-1} \gamma_i x^{(i)}_k\right)^{2}.
\end{multline}

This can be re-written $\varphi(\x)=\x^T \G\x$ with $$G_{ij}=\delta_{ij}(\alpha_i-\alpha_N\gamma_i^2) -(1-\delta_{ij})\alpha_N\gamma_i\gamma_j$$

It can be easily shown that since $\A$ is not in $\Xi^N$, $\G$ is not the null matrix. Indeed, if $\G=\mathbf{0}$, then either $\alpha_N=0$ $-$ and then $\alpha_i=\alpha_N\gamma_i^2=0$ for all $i$, which is excluded $-$ or $\alpha_N\neq 0$, and then $\alpha_N\gamma_i\gamma_j=0$ for all $i\neq j$, meaning only one $\gamma_{i_0}$ is non-zero, \textit{i.e.} $\x^{(N)} = -\gamma_{i_0} \x^{(i_0)}$ and hence $\A\in \Xi^N$.

Note that since $\G$ is not the null matrix, $dim(\ker \G)<N-1$. Furthermore, let $U:= \R^{N-1}\setminus \ker \G$. Invoking the Pre-Image theorem, the set $U\cap \varphi^{-1}(\{0\})$ is a submanifold of $\R^{N-1}$ of dimension $(N-1)-1=N-2$. Therefore, $\varphi^{-1}(\{0\})$ is a finite union of manifolds of dimensions smaller than (or equal to) $N-2$.

Since \cref{eq:blabla} can be written as an intersection of $N$ equations as the one defined by $\varphi$ (\textit{i.e.} one per coordinate), the set $M_\alpha$ of solutions of \cref{eq:blabla} is included in a finite union of manifolds of dimensions smaller than (or equal to) $N(N-2)$.

Finally, the total set $X$ of matrices we are after $-$ \textit{i.e.} of rank $N-1$ and which cannot be made full ranked by pointwise square $-$ can be defined as the union over $\alpha$ of all $M_\alpha$, \textit{i.e.} $X=\cup_\alpha M_\alpha$. As $X$ has the structure of a fiber bundle, with base space the set of $\alpha$'s (of dimension $N$), $X$ is a subset of submanifolds of dimensions smaller than $N + N(N-2) = N^2 -N < N^2 -1$ for $N>1$, which concludes the proof.

%

%

\end{proof}


\section{Proof of Theorem~\ref{thm:plif}}\label{app:plif}
\begin{proof}
Let $h:[-T,T]$ be any increasing function defined on $[-T, T]$. Assume bounded derivatives, i.e. $\exists R > 0$ s.t. $|h'(x)| < R, \forall x \in [-T,T]$. Then, for a fixed positive integer K, we consider the knots $l_i = -T + \frac{2Ti}{K}, \forall 0 \leq i \leq K$. Next, using standard linear interpolation, we define a piecewise linear function $f_K: [-T, T] \rightarrow \R$ s.t. $f_K(l_i) = h(l_i), \forall 0 \leq i \leq K$. Since $h$ is increasing, one obtains that $f_K$ is also increasing. It is then easy to see that $f_K$ is a PLIF function. Moreover, the slopes are given by the formula: $s_i = \frac{h(l_{i+1}) - h(l_{i})}{l_{i+1} - l_i}$.

We define the additional function $g_K(x):= f_K(x) - h(x)$. We wish to prove that $\lim_{K \rightarrow \infty} \max_{x \in [-T, T]} |g_K(x)| = 0$ . For this, we first use Cauchy's theorem deriving that $\exists c_i \in (l_{i+1}, l_i)$ s.t. $s_i = \frac{h(l_{i+1}) - h(l_{i})}{l_{i+1} - l_i} = h'(c_i)$. Thus, since $h'$ is bounded by R, we get that $|s_i| < R, \forall i$. This further implies that $|g_K'(x)| < 2R, \forall x \in [-T,T]$. Moreover, from the definition of $f_K$ we have that $g_K(l_i) = 0, \forall i$. Finally, for any $x \in [-T,T]$, let $[l_{i+1}, l_i]$ be the interval in which $x$ lies. We have that:
\begin{equation}
\begin{split}
|g_K(x)| = |g_K(x) - g_K(l_i)| = \\
= \frac{|g_K(x) - g_K(l_i)|}{|x - l_i|} |x - l_i| \leq \\
\leq 2R|x - l_i| \leq 2R \frac{2T}{K}
\end{split}
\end{equation}
where the first inequality happens from the same argument derived from Cauchy's theorem as above. It is now trivial to prove that $\lim_{K \rightarrow \infty} \max_{x \in [-T, T]} |g_K(x)| = 0$, which concludes our proof.

\end{proof}


\section{Effect of the Dirichlet concentration }\label{app:dir_conc}
See \cref{fig:dir_conc}.
\begin{figure*}
\begin{center}
\includegraphics[scale=0.23]{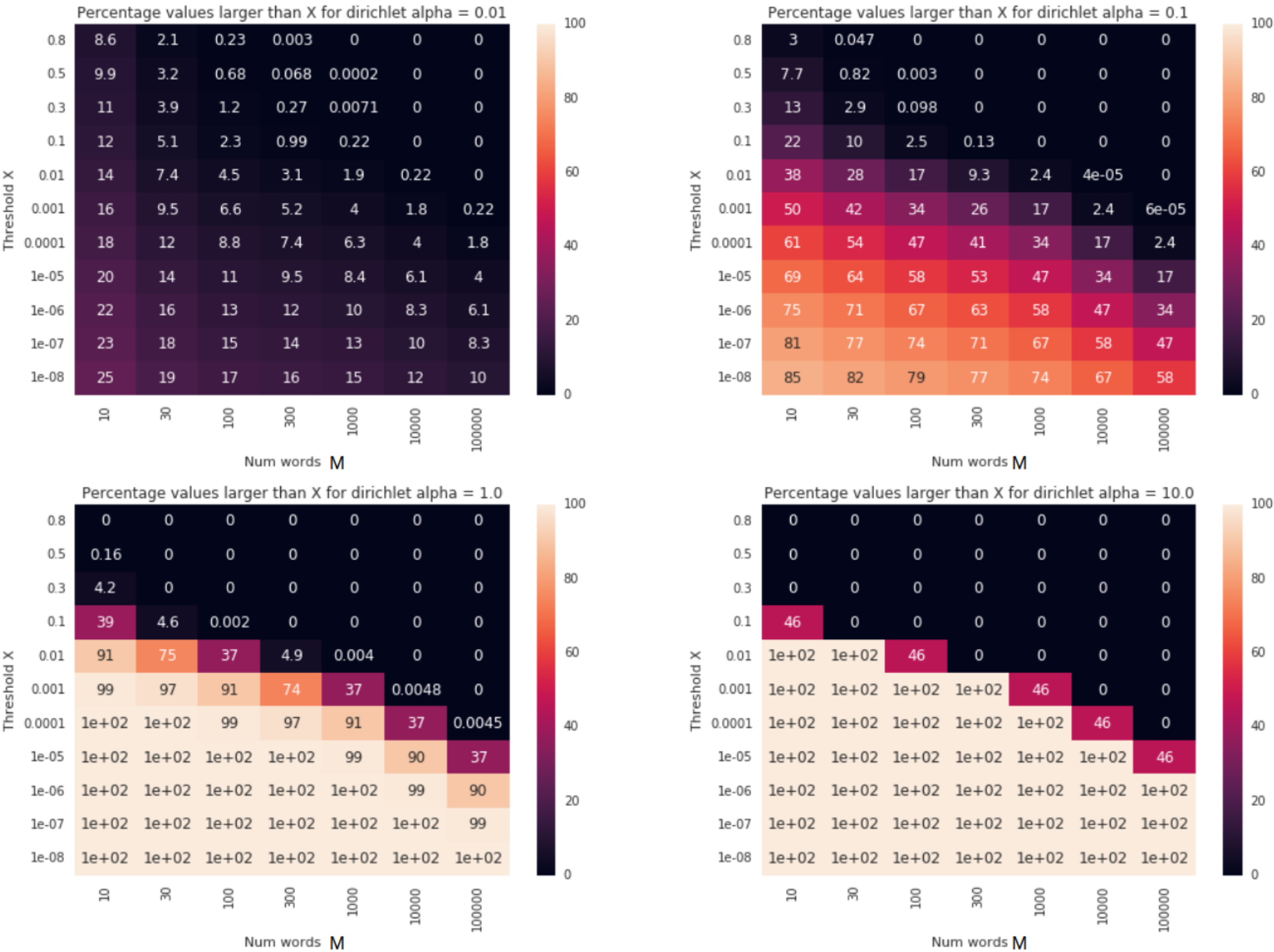}
\caption{Distribution of M-class discrete distributions sampled from a Dirichlet prior. Larger concentration parameters result in close to uniform distributions, while low values result in sparse or long-tail distributions.}
\label{fig:dir_conc}
\end{center}
\end{figure*}


\section{Additional Synthetic Experiments}\label{app:synthetic}
See \cref{fig:argmax_0_01,fig:kl_0_01,fig:argmax_1}.

\begin{figure*}
\begin{center}
\includegraphics[scale=0.16]{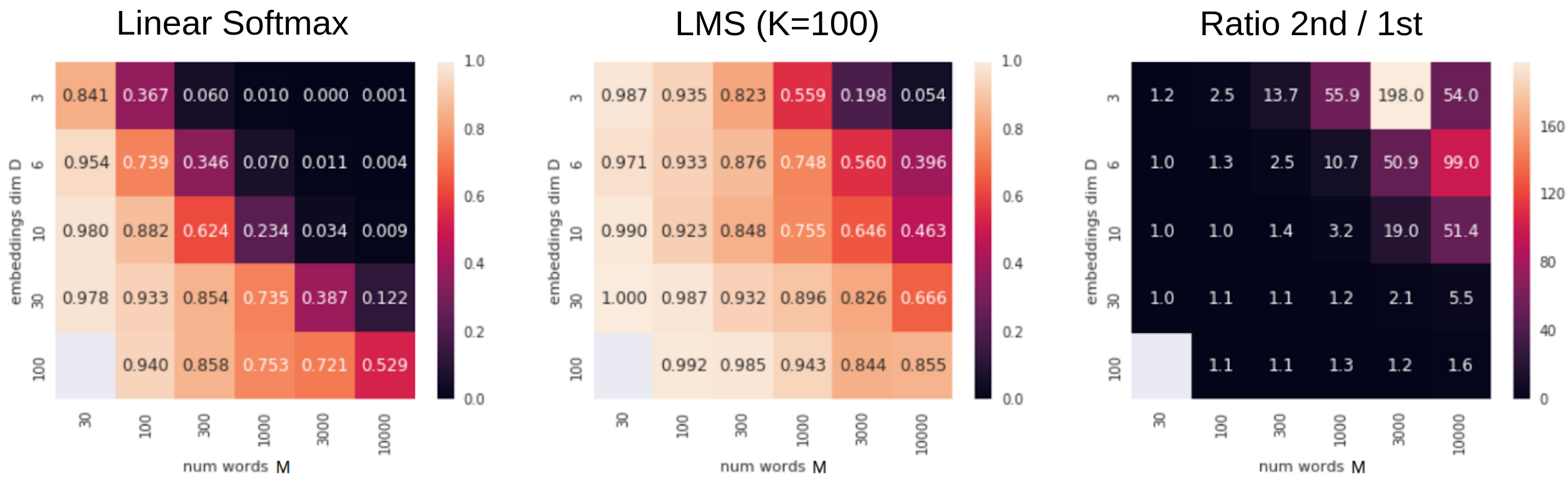}
\includegraphics[scale=0.16]{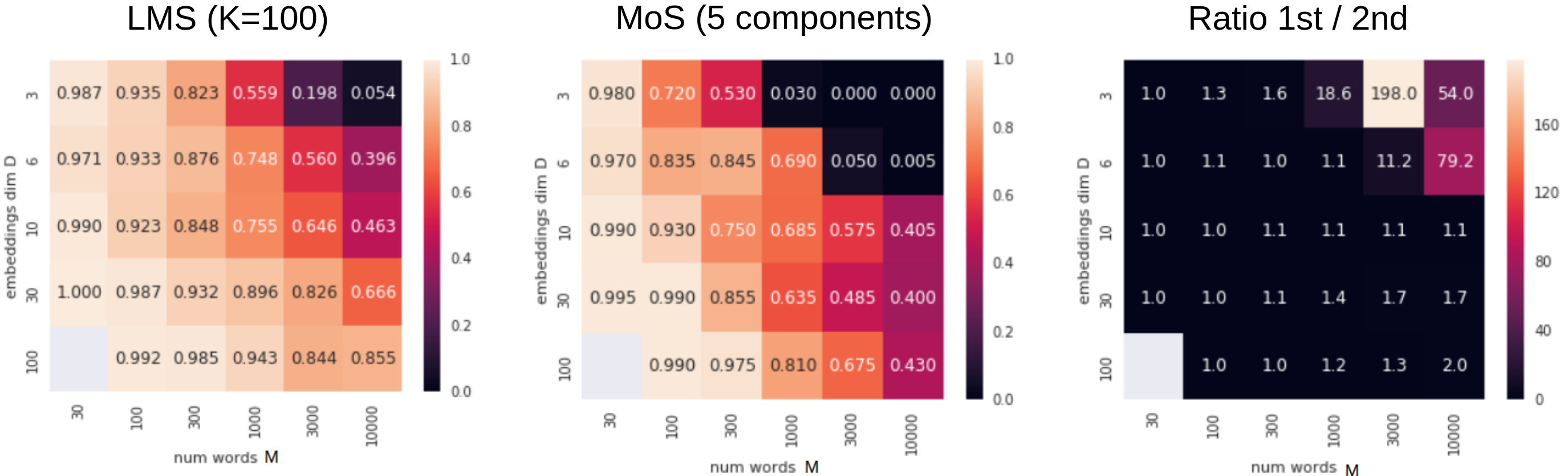}
\caption{Percentage of contexts $j$ for which the modes of true and parametric distributions match, i.e $\arg\max_{i} P^*(x_i|c_j) = \arg\max_{i} Q_{\Theta}(x_i|c_j)$. Higher the better. Dirichlet concentration $\alpha = 0.01$. }
\label{fig:argmax_0_01}
\end{center}
\end{figure*}

\begin{figure*}
\begin{center}
\includegraphics[scale=0.164]{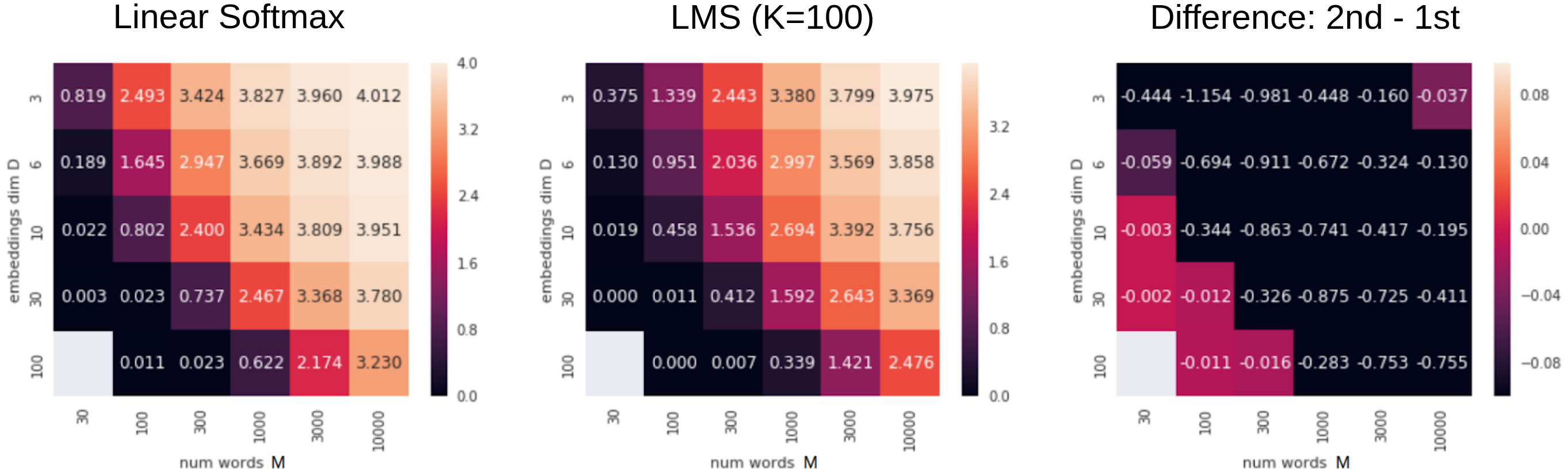}
\includegraphics[scale=0.16]{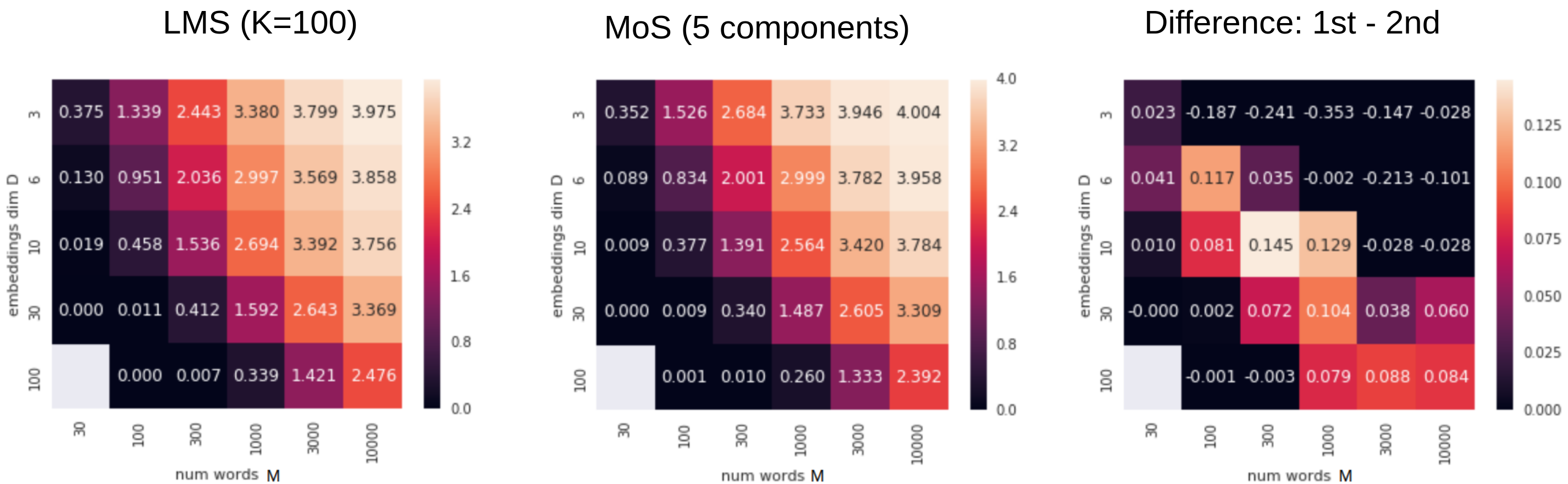}
\caption{Average $KL(P^* || Q_{\Theta})$ (across all contexts). Lower the better. Dirichlet concentration $\alpha = 0.01$. }
\label{fig:kl_0_01}
\end{center}
\end{figure*}

\begin{figure*}
\begin{center}
\includegraphics[scale=0.15]{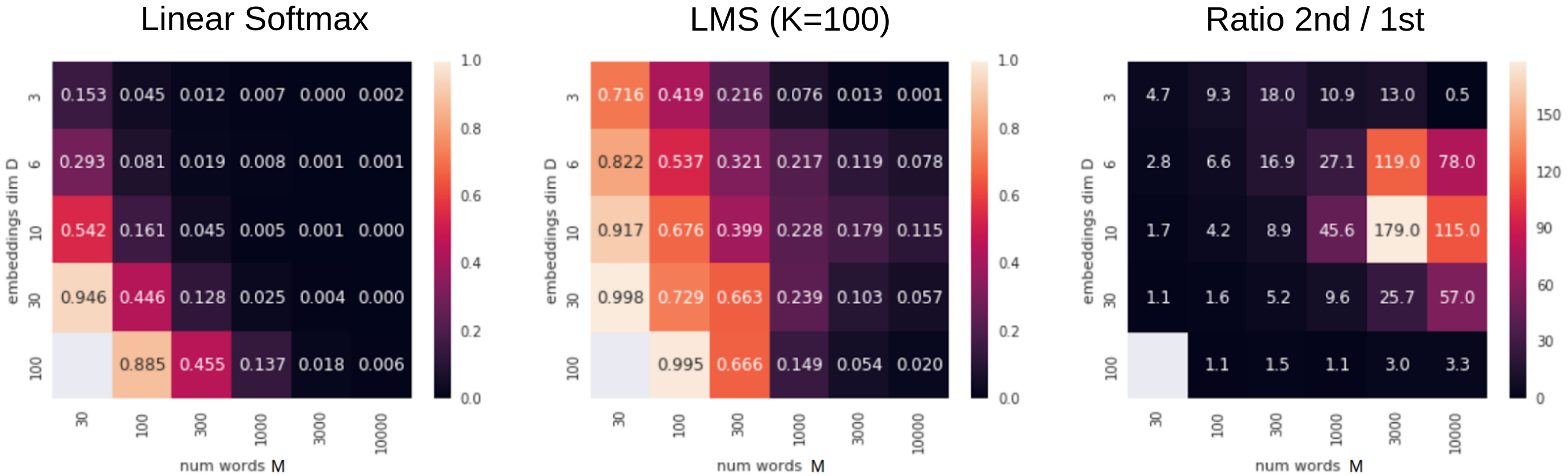}
\includegraphics[scale=0.15]{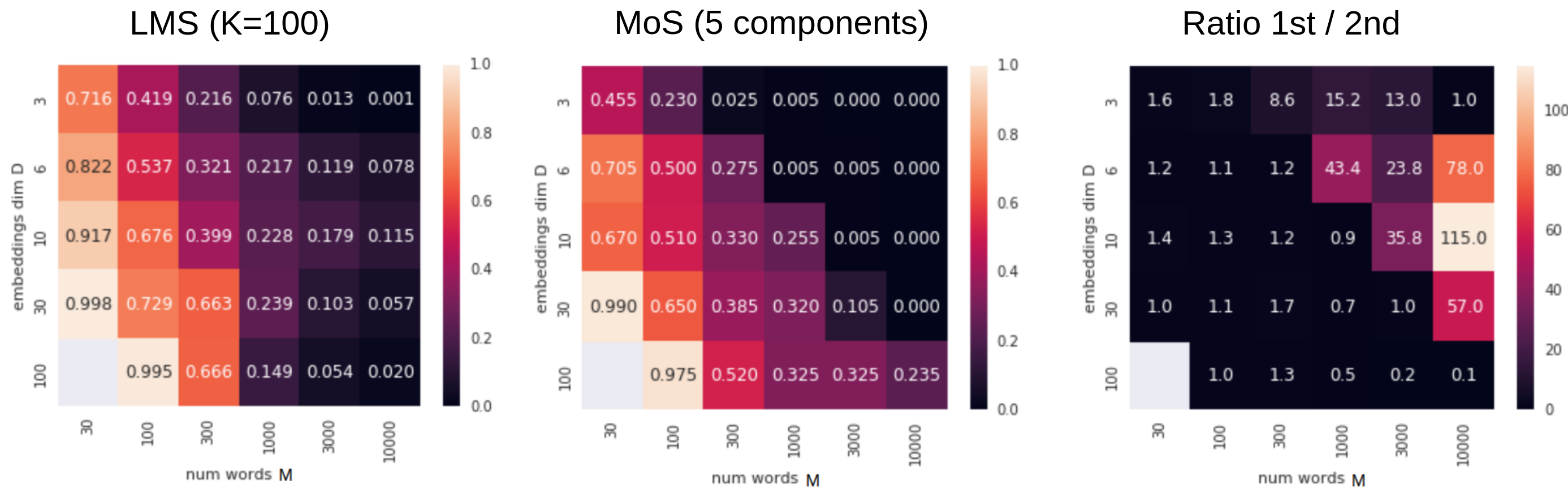}
\caption{Percentage of contexts $j$ for which the modes of true and parametric distributions match, i.e $\arg\max_{i} P^*(x_i|c_j) = \arg\max_{i} Q_{\Theta}(x_i|c_j)$. Higher the better. Dirichlet concentration $\alpha = 1$. }
\label{fig:argmax_1}
\end{center}
\end{figure*}


\end{document}